\title{Programmatic Reward Design by Example}
\author{
    Weichao Zhou and Wenchao Li 
}
\newtheorem{theorem}{Theorem}
\newtheorem{definition}{Definition}
\newtheorem{example}{Example}
\newcommand{\li}[1]{}
\newcommand{\zwccross}[1]{}
\newcommand{\zwcadd}[1]{#1}
\newcommand{\zwcchange}[2]{\zwcadd{#2}\zwccross{#1}}
\newcommand{\camadd}[1]{#1}
\begin{document}

\maketitle

\begin{abstract}
\begin{quote}
    {{Reward design is a fundamental problem in reinforcement learning (RL). A misspecified or poorly designed reward can result in low sample efficiency and undesired behaviors.
    In this paper, we propose the idea of \textit{programmatic reward design}, i.e. using programs to specify the reward functions in RL environments. 
    Programs allow human engineers to express sub-goals and complex task scenarios in a structured and interpretable way.
    The challenge of programmatic reward design, however, is that while humans can provide the high-level structures, properly setting the low-level details, such as the right amount of reward for a specific sub-task, remains difficult. 
    A major contribution of this paper is a probabilistic framework that can infer the best candidate programmatic reward function from expert demonstrations. Inspired by recent generative-adversarial approaches, our framework \zwcchange{trains an agent policy to generate trajectories while searching for the most likely program that can discriminate the demonstrated trajectories from those of the agent}{searches for the most likely programmatic reward function under which the optimally generated trajectories cannot be differentiated from the demonstrated trajectories}. Experimental results show that programmatic reward functions learned using this framework can significantly outperform those learned using existing reward learning algorithms, and enable RL agents to achieve state-of-the-art performance on highly complex tasks.
    }}
\end{quote}
\end{abstract}
\section{Introduction}\label{intro}

Reward signals are an integral part of reinforcement learning (RL). 
Most conventional reward functions are goal-driven -- they reward the agent only at the end of each episode.
The sparsity of such reward signals, however, can make RL algorithms highly inefficient.
As the complexity of the task increases, it becomes difficult for the agent to grasp the intricacies of the task solely from a goal-driven reward \cite{DBLP:journals/corr/AmodeiOSCSM16}. \zwccross{A naive reward mapping can also give rise to various negative effects in RL }
\li{not clear what naive reward mapping means here}
\li{what about other reward shaping approaches?}

Inverse Reinforcement Learning (IRL) is a general paradigm that aims at recovering the intrinsic reward function of human experts from their demonstrations~\cite{Ng:2000:AIR:645529.657801,Ziebart:2008:MEI:1620270.1620297}. Earlier works of IRL require the provision of multiple feature functions to construct the reward function. 
More recent attempts use function approximation by means of deep neural networks to alleviate this limitation and have considerable success~\cite{fu2017learning,DBLP:journals/corr/FinnLA16,DBLP:journals/corr/FinnCAL16}. 
However, due to the lack of interpretability of the approximated reward functions, it is difficult to enforce specific correctness constraints in the reward learning process.\zwccross{and to verify whether a learned reward function satisfies certain desired properties}\li{why is verification difficult?}

Recent works proposed logic-based reward functions~\cite{li2017reinforcement,ijcai2019-840} \zwcchange{as competitive solutions for endowing}{to endow} an RL agent with high-level knowledge of the task via logical specifications. \zwccross{The reward func works focus on defining a logical specification and fulfilling a task specification with a reward function that is usually characterized as an automaton translated from the specification.}\zwcadd{The logic-constrained reward \cite{hasanbeig2019logically} and reward machines \cite{pmlr-v80-icarte18a} explicitly represents reward functions as automata. 
However, it is still cumbersome to design the automata and they can be difficult to understand when the size of the automata are large. 
}
\li{any shortcoming of this type of approach? needs better transition here}In this paper, we propose \textit{programmatic reward functions}, i.e. reward functions represented as programs expressed in human-readable domain specific language (DSL). 
There are several benefits of using programmatic reward functions for RL policy training. 
First, programs allow human engineers to express sub-goals, complex task scenarios in a structural and interpretable way.
The inclusion of such domain knowledge forms inductive biases that help improve the sample efficiency and performance of RL agents.
Second, engineers can take advantage of the rich semantics of DSLs to explicitly memorize, manipulate and leverage hindsight experiences of the RL agent. 
Lastly, programs are amenable to the specification of symbolic constraints \zwcadd{over the holes}. \li{may need to clarify constraints over what} 
In a typical design routine of a programmatic reward function, 
we assume that an engineer can provide human insights in the form of a partial program, or a \textit{sketch} \cite{solar2008program}\zwcadd{, analogous to human providing feature functions in IRL and logical specifications in logic-based reward designs,} to express the high-level structures of the task specification or reward function. 
The \textit{sketch} in essence defines the set of events and certain interactions among them that the human engineer deem relevant to the task. 
The low-level details, such as the right amount of reward for a specific event or sub-task, are left as \textit{holes}.

Similar to Programming by Example (PBE)~\cite{menon2013ml-for-pbe}, we propose to infer the holes in a programmatic reward sketch from expert demonstrated trajectories.
A key difference of our approach from PBE is that the demonstrated trajectories do not directly correspond to inputs or (intermediate) outputs of the program, but instead are assumed to be generated by an unknown expert policy that is optimal under some realization of the programmatic reward function.
A major contribution of this paper is a probabilistic learning algorithm that can complete a given programmatic reward sketch based on expert demonstrations.
Our overall framework, called \textit{Programmatic Reward Design by Example (PRDBE)}, consists of three components: a set of \textit{example trajectories} demonstrated by a human expert, a \textit{program sketch} and a \textit{symbolic constraint} that the complete program should satisfy. 
Directly searching in the program space is a combinatorial problem and can easily become intractable. 
Our approach is to search for the \textit{most likely} program that matches the expert's intrinsic reward function. 
Our solution is inspired by generative adversarial approaches~\cite{DBLP:journals/corr/FinnCAL16,NEURIPS2018_943aa0fc} which introduce a discriminator to distinguish agent trajectories from the expert's. 
However, instead of \zwcchange{using}{formulating}\li{formulating?} an agent's policy as a generative model, we sample trajectories that are optimal under a candidate programmatic reward function and \zwcadd{iteratively} \li{iteratively?} \zwcadd{improve the candidate program to maximize the chance of the discriminator making}\li{any notion of maximization here?} false predictions on those trajectories. 
To circumvent the issue of non-differentiability of programs, we employ a sampler to sample candidate programs \zwcchange{from a given search space}{from the space of valid programs}\li{not clear how the search space is given; or can we just say from the space of valid programs?}. 
In particular, we use self-normalized importance sampling to sample trajectories from an agent's policy. 
We summarize our contributions below.
\begin{itemize}
\item We propose Programmatic Reward Design by Example \zwcadd{(PRDBE)}, a novel paradigm to design and learn program-like reward functions for RL problems.
\item We develop a probabilistic learning framework that can infer the most likely candidate reward program from expert demonstrations. 
\item Our approach enables RL agents to achieve state-of-the-art performance on highly complex environments with only a few demonstrations. In addition, we show that programmatic reward functions generalize across different environment configurations of the same task. 
\end{itemize}


\section{Related Work}

\li{I still need to go over this section again}
\noindent\textbf{Inverse Reinforcement Learning}. 
IRL \cite{Ng:2000:AIR:645529.657801,Abbeel:2004:ALV:1015330.1015430} instantiates a learning-from-demonstrations (LfD) framework to effectively infer the intrinsic reward functions of human experts. It is also notable for having infinite number of solutions. 
To resolve the ambiguity of IRL, max-entropy \cite{Ziebart:2008:MEI:1620270.1620297}, max-margin \cite{Abbeel:2004:ALV:1015330.1015430,Ratliff:2006:MMP:1143844.1143936} and Bayesian \cite{ramachandran2007bayesian} methods have been proposed. However, those IRL methods assume linear rewards on the basis of non-linear state features, 
and also call RL in a loop to repeatedly solve the entire environments. The recent works \cite{fu2017learning,ho2016generative,NEURIPS2018_943aa0fc,DBLP:journals/corr/FinnCAL16}, drawing a connection between IRL and Generative Adversarial Networks (GANs) \cite{goodfellow2014generative}, have achieved substantially improved the scalablity of IRL by using deep neural networks and data-driven approaches. Our work, while embracing a data-driven ideology, represents the reward function in an interpretable way.

\noindent\textbf{Reward Design}. \zwcadd{Reward shaping \cite{Ng:1999:PIU:645528.657613} highly speeds up RL training by modifying a sparse reward functions with state-based potential functions. Intrinsic reward generation \zwcchange{with the state-visitation count-based \cite{bellemare2016unifying}, curiosity-driven \cite{pathak2017curiosity}, impact-driven \cite{DBLP:journals/corr/abs-1708-08611}}{\cite{bellemare2016unifying,pathak2017curiosity,DBLP:journals/corr/abs-1708-08611}} and adversarially-guided  \cite{flet-berliac2021adversarially} techniques aims at motivating the agent to exhaustively explore the environments.} 
\li{We don't need to include generic RL works. The curiosity-based approaches are relevant. Is there a better categorization? see my later comment on reward shaping as well.} \zwcadd{Unlike these approaches, our work aims at capturing human knowledge in the reward function and 
does not generate uninterpretable reward signals densely ranging over the entire state space.} Logic-based reward designs \cite{li2017reinforcement,hasanbeig2019logically,ijcai2019-840} present human knowledge in reward functions with specification languages such as linear temporal logic (LTL)\cite{10.5555/1373322}. Reward machine theories \cite{DBLP:journals/corr/abs-2010-03950} further directly represent the reward functions as finite state automata which can be translated into logic formulas. \zwccross{Nonetheless, many logic languages have limited expressiveness and flexibility in specifying complex human insights.}Our work distinguishes itself by 1) using programming languages instead of logics to expressively represent human insights in the reward functions; 2) adopting LfD to implement low-level details in the reward functions. \zwcadd{Regarding LfD, }Inverse reward design (IRD) \cite{NIPS2017_32fdab65} design reward functions in a manner \zwccross{of LfD}similar to IRL. Safety-aware apprenticeship learning \cite{zhou2018safety} incorporate formal specification and formal verification with IRL. \zwcadd{However, those works restrict the reward function to be a linear combination of state features. Our work does not have such limitations.}
\li{what about reward shaping?}

\noindent\textbf{Interpretable Reinforcement Learning}. 
Learning interpretable RL policies has drawn continual attention \cite{andre2001programmable,andre2002state,verma2018programmatically,DBLP:journals/corr/abs-1907-07273,DBLP:journals/corr/abs-2102-11137,tian2020learning}. Our work focuses on programmatic reward functions instead of policies \zwcadd{because well-designed reward functions can benefit diverse RL training methods.} There have been a variety of works on learning-based program synthesis \cite{ellis2021dreamcoder,parisotto2016neuro,itzhaky2010a,programming-examples-pl-meets-ml,NIPS2015_5785}.
Our work is inspired by the concept of sketch synthesis \cite{solar2008program}. Realizing sketch synthesis from example for reward function is our main contribution in this paper.  

\section{Background}\label{prelim}

\li{italicize certain recurring terms, such as trajectory}
\textbf{Reinforcement Learning.} RL problems model an environment as a Markov Decision Process (MDP) $\mathcal{M}:=\langle \mathcal{S},\mathcal{A}, \mathcal{P}, d_0\rangle$ where $\mathcal{S}$ is a state space, $\mathcal{A}$ is an action space, $\mathcal{P}(s'|s, a)$ is the probability of reaching a state $s'$ by performing an action $a$ at a state $s$, $d_0$ is an initial state distribution. A \textit{trajectory} $\tau=s^{(0)}a^{(0)}s^{(1)}a^{(1)}\ldots s^{(T)}a^{(T)}$ is produced by sequentially performing actions for $T$ steps after starting from an initial state $s^{(0)}\sim d_0$. An RL agent can select actions according to a control \textit{policy} $\pi(a|s)$ which determines the probability of performing action $a$ at any state $s$.  A \textit{reward function} is a mapping $f:\mathcal{S}\times \mathcal{A}\rightarrow \mathbb{R}$ from state-action pairs to the real space. With a slight abuse of notations, we also represent the total reward and the joint probability along a trajectory $\tau$ as $f(\tau)=\sum^{T}_{t=0} f(s^{(t)}, a^{(t)})$ and  $\pi(\tau)=\prod^{T}_{t=0}\pi(a^{(t)}|s^{(t)})$ respectively.  
\zwccross{In the probabilistic graphical model (PGM) of RL, a binary random variable $o^{(t)}$ is used to indicate whether the behavior at time step $t$ is optimal \cite{levine2018reinforcement}. The probability of a state-action pair being optimal is $p(o^{(t)}=1|s^{(t)}, a^{(t)};f)\propto exp(f(s^{(t)},a^{(t)}))$. The objective of RL is then to minimize $J_{RL}(\pi)=D_{KL}\big(\pi(\tau)||p(\tau)\prod^T_{t=0} p(a^{(t)}|s^{(t)},o^{(t)}=1)\big)$ where $p(\tau)=d_0(s^{(0)})\prod^{T-1}_{t=0}\mathcal{P}(s^{(t+1)}|s^{(t)},a^{(t)})$ is the distribution of $\tau$ under the passive dynamics of the environment.}{Let $H(\pi)$ be the entropy of $\pi$. The objective of entropy-regularized RL~\cite{levine2018reinforcement} is to minimize $J_{RL}(\pi)=\mathbb{E}_{\tau\sim \pi}[f(\tau)] - H(\pi)$.}


\noindent
\textbf{Learning from Demonstrations.} When the reward function is not given but a set $E$ of expert trajectories $\tau_E$'s is demonstrated by some unknown expert policy $\pi_E$,  GAIL\cite{ho2016generative} trains an agent policy $\pi_A$ to imitate $\pi_E$ via a generative adversarial objective as in \eqref{eq1_1} where a discriminator $D:\mathcal{S}\times \mathcal{A}\rightarrow[0, 1]$ is trained to maximize \eqref{eq1_1} so that it can identify whether an $(s,a)$ is sampled from $\tau_E$'s; $\pi_A$ as the generator is optimized to minimize \eqref{eq1_1} so that its trajectories $\tau_A$'s are indistinguishable from $\tau_E$'s.  
Bayesian GAIL \cite{NEURIPS2018_943aa0fc} labels any expert trajectory $\tau_E$ with $1_E$ and $0_E$ to respectively indicate classifying $\tau_E$ as being sampled from $E$ or not. Likewise, $1_A$ and $0_A$ indicate classifying any agent trajectory $\tau_A$ as from $E$ or not. Bayesian GAIL learns the most likely $D$ that makes the correct classifications $0_A, 1_E$ in terms of $p(D|0_A, 1_E; \pi_A; E)\propto p(D)p(0_A, 1_E|\pi_A, D; E)\propto\sum_{\tau_A} p(\tau_A|\pi_A)p(0_A|\tau_A; D)\sum_{\tau_E} p(\tau_E|E) p(1_E|\tau_E;D)$ where $p(1|\tau; D)=\prod^T_{t=0}D(s^{(t)},a^{(t)})$. Its logarithm \eqref{eq1_0} is lower-bounded by \eqref{eq1_1} due to Jensen's inequality. It is shown in \cite{fu2017learning} that by representing $D(s,a):=\frac{\exp(f(s,a))}{\exp(f(s,a)) + \pi_A(a|s)}$ with a neural network $f$, \eqref{eq1_1} is maximized when $f\equiv \log \pi_E$, implying that $f$ is a reward function under which $\pi_E$ is optimal. Hence, by representing $D$ in \eqref{eq1_0} and \eqref{eq1_1} with $f$, an IRL objective of solving the most likely expert reward function $f$ is obtained.
\begin{eqnarray}
&\log \sum\limits_{\tau_E} p(\tau_E|E) p(1_E|\tau_E;D)\sum\limits_{\tau_A} p(\tau_A|\pi_A)p(0_A|\tau_A; D)& \label{eq1_0}\\
&\geq\qquad \underset{\mathclap{\tau_E\sim E}}{\mathbb{E}}\quad\ \Big[\log \prod\limits^T_{t=0}D(s_E^{(t)},a^{(t)}_E)\Big ]+\qquad\qquad&\nonumber\\
&\qquad\underset{\mathclap{\tau_A\sim\pi_A}}{\mathbb{E}}\quad\ \Big[\log \prod\limits^T_{t=0}1-D(s_A^{(t)},a^{(t)}_A)\Big]&\label{eq1_1}
\end{eqnarray}

\noindent    
\textbf{Program Synthesis by Sketching.} 
In the sketch synthesis problem, a human designer provides a \textit{sketch} $\mathtt{e}$, i.e., an incomplete program wherein certain details are left empty. Each unknown detail is called a \textit{hole} and denoted as $\mathtt{?_{id}}$, indexed by $\mathtt{id}$ in order of its appearance in $\mathtt{e}$. The formalism of the sketches follows a general grammar $\Lambda$ as in \eqref{syn3_0} and \eqref{syn3_1} where $\mathtt{\mathcal{G}}$ is a family of functions customized with domain-specific knowledge for the tasks; $\mathtt{x}$ represents the input argument of the program. The grammar $\mathtt{\Lambda}$ induces a set $\mathtt{\mathcal{E}}$ of syntactically allowable sketches. When a sketch $\mathtt{e\in\mathcal{E}}$ is given by the designer,  $\mathtt{\textbf{?}_e=\{?_1,?_2,\ldots\}}$ denotes an ordered set of the holes appearing in $\mathtt{e}$. Let $\mathtt{\mathcal{H}}$ be a set of possible assignments to $\mathtt{\textbf{?}_e}$. The sketch $\mathtt{e}$ and $\mathtt{\mathcal{H}}$ induce a set ${\mathcal{L}=\{l\triangleq \mathtt{e[\textbf{h}/\textbf{?}_e]| \textbf{h}\in \mathcal{H}}}\}$ of complete programs where $\mathtt{e[\textbf{h}/\textbf{?}_e]}$ means substituting $\mathtt{\textbf{?}_e}$ in $\mathtt{e}$ with an assignment $\mathtt{\textbf{h}}$. Besides the syntax, the program $l$ is required to be a function with a list type $[(\mathcal{S}, \mathcal{A})]$ argument. A valid assignment to the argument can be a list-represented trajectory, i.e., by writing $\tau=s^{(0)}a^{(0)}\ldots s^{(t)}a^{(t)}$ as $[(s^{(0)}a^{(0)}),\ldots, (s^{(t)},a^{(t)})]$. Hereinafter we refer to $\tau$ either as a sequence of state-action pairs or as a list of state-action tuples depending on the contexts. \camadd{The rules in \eqref{sem3_0} and \eqref{sem3_1} define the semantics of $\mathtt{\Lambda}$ wherein \eqref{sem3_1} replaces $\mathtt{x}$ with $\tau$ in $\mathtt{g(e_1,\ldots,e_n)}$. The result of directly applying input $\tau$ to a sketch $\mathtt{e}$ is written as $\mathtt{[\![e]\!]}(\tau)$ which induces a partial program with $\mathtt{\textbf{?}_e}$ as free variables. For any complete program $l$, given a trajectory input $\tau$, the output ${[\![l]\!]}(\tau)$ is required to be a $|\tau|$-length real-valued list.} 
 \begin{eqnarray}
Sketch\ \mathtt{e} &:=& \mathtt{u\ |\ g(e_1, \ldots, e_n)\qquad g\in \mathcal{G}}\label{syn3_0}\\
Term\ \mathtt{u}&:=& \mathtt{const\ |\ \mathtt{?_{id}}\ |\ x }\label{syn3_1}
\end{eqnarray}
\begin{eqnarray}
&\mathtt{[\![const]\!]}(\tau) := \mathtt{const\ \ [\![\mathtt{?_{id}}]\!](\tau):=\mathtt{?_{id}}\ \  [\![x]\!](\tau):=\tau} &\ \label{sem3_0}\\
&\mathtt{[\![g(e_1,\ldots, e_n)]\!]}(\tau) := \mathtt{g(e_1, \ldots, e_n)[ \tau/x]}&\label{sem3_1}
\end{eqnarray}

\section{Motivating Example}
In this section, we motivate the problem of programmatic reward design \camadd{with the pseudo-code of two programmatic reward function sketches, one for {a navigation task in a gridworld} and the other one for representing a reward function that has been formulated as a finite state automaton (FSA). 
For those two tasks, we assume that the domain expert provides a set of tokens such as $\mathtt{reach\underline{\ }goal, unlock\underline{\ }door}$ representing the behavior of the agent, and $\mathtt{A,B,C}$ representing the FSA states. A predicate function $\mathtt{pred(\cdot)}\in \mathcal{G}$ of trajectory can output those tokens according to the last state-action in the input trajectory.}
 
\begin{figure}
     \centering
     \begin{subfigure}[b]{0.1\textwidth}
         \centering
        \includegraphics[height=2.7cm, width=1.8cm]{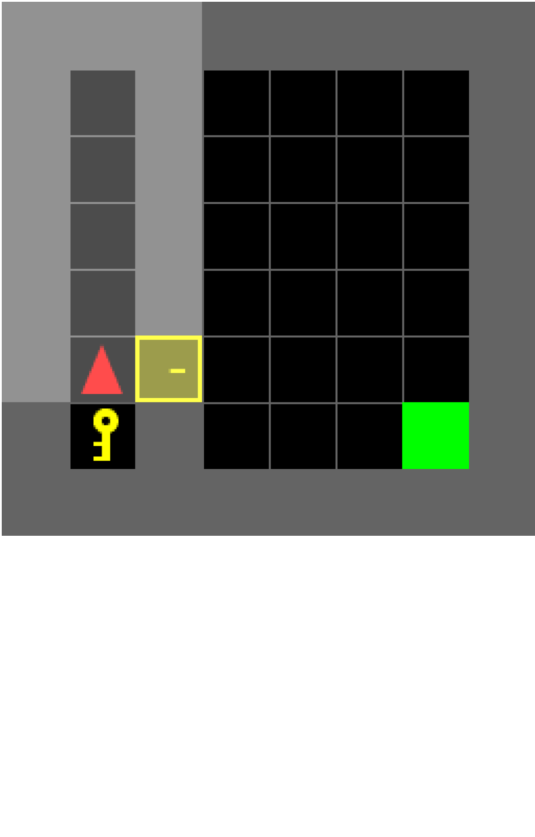}
         \caption{}
         \label{fig1_1}
     \end{subfigure}
     \begin{subfigure}[b]{0.32\textwidth}
         \centering
         \includegraphics[height=3.3cm,width=7.3cm]{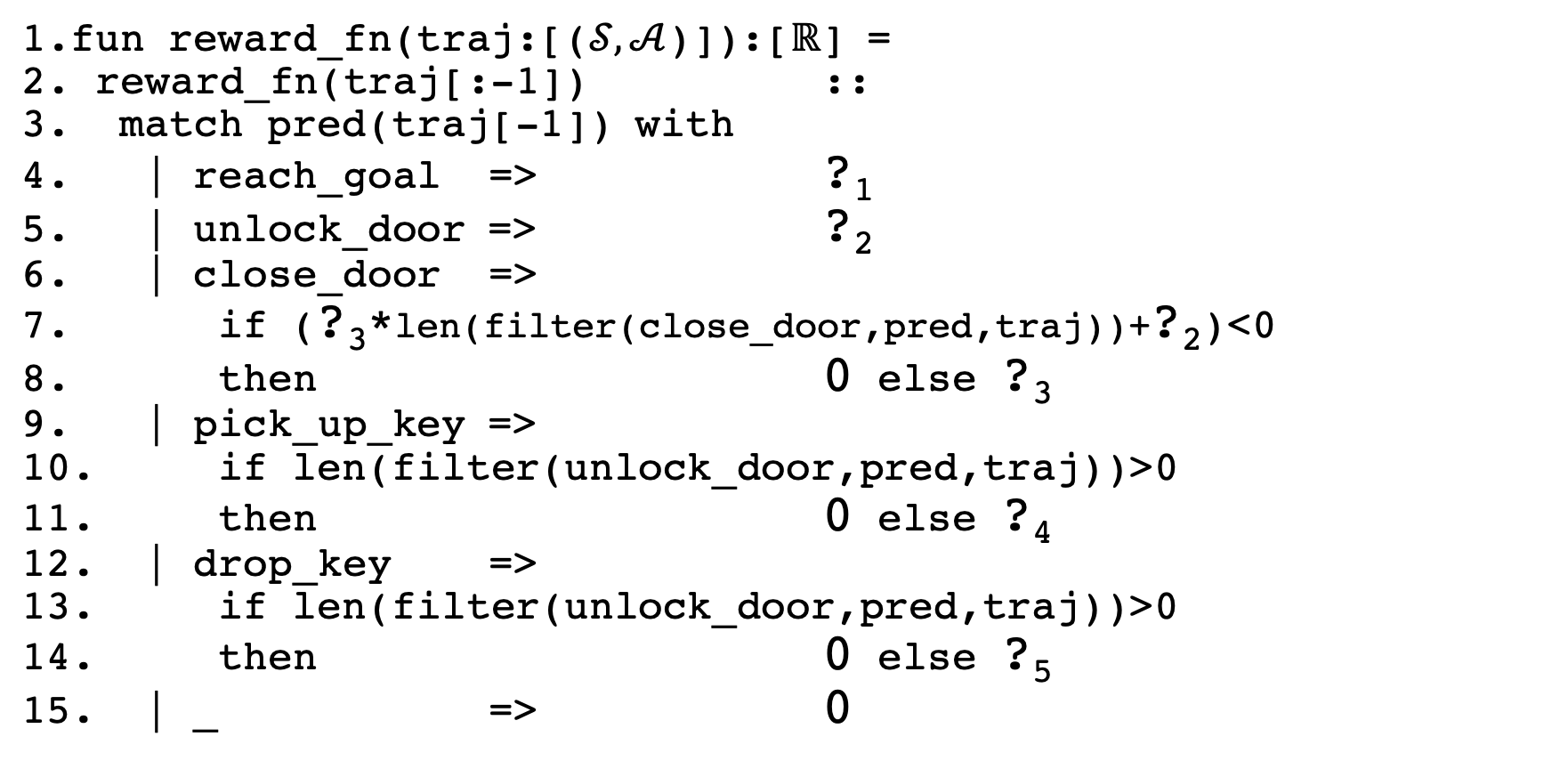}
         \caption{}
         \label{fig1_2}
     \end{subfigure}
     \begin{subfigure}[b]{0.1\textwidth}
         \centering
        \includegraphics[height=3.1cm, width=1.9cm]{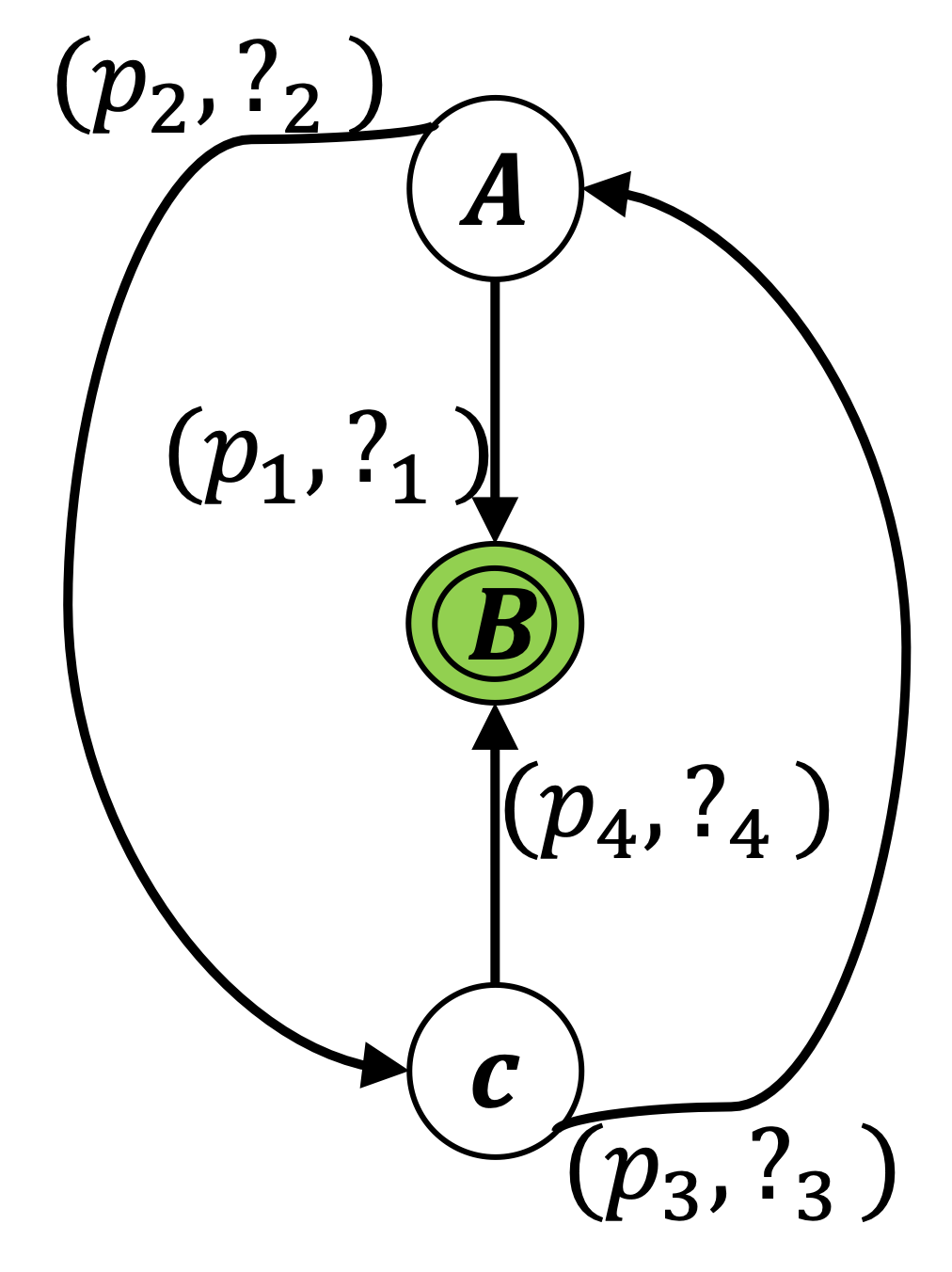}
        \caption{} 
         \label{fig3_1}
     \end{subfigure}
        \begin{subfigure}[b]{0.32\textwidth}
         \centering
         \includegraphics[height=3.2cm,width=6.4cm]{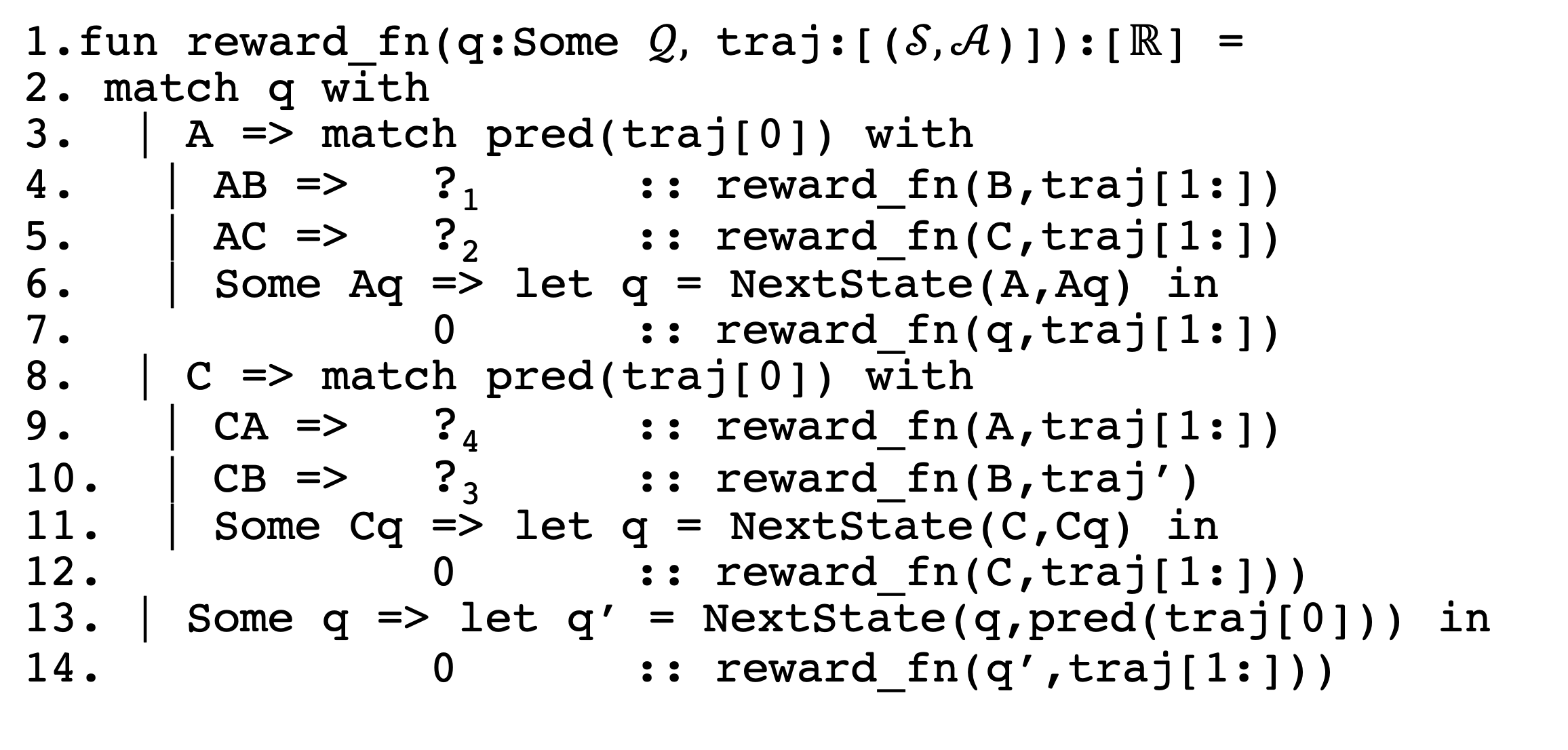}
         \caption{}
         \label{fig3_2}
     \end{subfigure}
     \caption{(a) MiniGrid 8x8 Door-Key; (b) A programmatic reward sketch for (a);(c) A reward function as a finite state automaton; (d) a programmatic reward sketch for (c)} 
\end{figure}
\begin{example}\textbf{(Door-Key Task)}\label{eg2_1} Fig.\ref{fig1_1} is an $8\times8$ Door-Key task in a Mini-Grid environment~\cite{gym_minigrid}. An agent needs to pick up a key, unlock the yellow door on the grey wall and reach the green goal tile. In every step, the agent can observe at most $7\times 7$ area in front if the area is not blocked by walls and doors. By default, the environment only returns a reward when the agent reaches the goal tile.
\end{example}
A sketch $\mathtt{reward\underline{\ } fn}$ of programmatic reward function for Example \ref{eg2_1} is shown in Figure\ref{fig1_2}. The holes $\mathtt{\{{?_{id}}\}^{5}_{id=1}}$ are unknown numerical values. The input argument is written as $\mathtt{traj}$ instead of $\mathtt{x}$ for readability. The output is a real-valued list concatenating the results of each recursive call to $\mathtt{reward\underline{\ } fn}$. The elements in the output list are the very rewards for the state-action pairs in the input trajectory in successive steps. This sketch responds to events such as reaching the goal and unlocking the door. 
We note that this programmatic reward function, whereas sparse in appearance considering the large state space, differs from the default goal-driven reward by informing the agent the stage-wise completion of the task.  Line $7$, $10$ and $13$ explicitly leverage hindsight experience to determine reward for the current step. 
Critically, $\mathtt{?_{id}}$'s in Figure\ref{fig1_2} ought not to be arbitrarily assigned. Suppose that a penalty $\mathtt{?_5}\leq 0$ for dropping the key is less than a award $\mathtt{?_4}\geq 0$ for picking up the key. The agent would repeatedly pick up and drop the key to obtain high net gain $\mathtt{?_5+?_4}\geq 0$ instead of reaching for the goal state. Besides, we observe that the penalties for redundant actions such as closing the door ought not to be overwhelmingly high. Otherwise an under-trained agent may be incentivized to reside away from the door for good. 

\begin{example}\textbf{(Reward Function as A Finite State Automaton (FSA))} \label{eg3_1} In Fig.\ref{fig3_1}, a reward function for some RL problem is represented as an FSA comprised of at least $3$ states $A, B, C$ among which $A$ indicates task initialization, $B$ indicates task completion, $C$ indicates the occurrence of some other event of interests. Each directed edge represents a state-transition triggered by a step in the environment.  Other states and transitions are omitted. Each transition is annotated with $(p_{\mathtt{id}},\mathtt{?_{id}})$ where $p_{\mathtt{id}}$ is an unknown transition probability dependent on both the environment and the RL policy; $\mathtt{?_{id}}$ is the reward sent to the agent at the moment of the transition. An RL agent is supposed to accomplish the task with minimal amount of steps in the environment. Note that the states in Fig.\ref{fig3_1} are not to be confused with the MDP states in the environment; one step in the environment does not necessarily trigger the transitions drawn in Fig.\ref{fig3_1} either.
\end{example}
FSAs such as the one in Example \ref{eg3_1} are explicitly or implicitly adopted in several logic-based reward designs \zwcadd{such as reward machines~\cite{pmlr-v80-icarte18a}}. Such an FSA can be represented by the sketch in Fig.\ref{fig3_2} by introducing a local variable $\mathtt{q}$ as the FSA state. The key problem again is to determine appropriate values for the $\mathtt{?_{id}}$'s. We discuss the challenges facing this problem in the next section. 
\section{Problem Formulation}

In this section we augment the concept of sketching. Then we characterize the problem of Programmatic Reward Design (PRD) and discuss the challenges in PRD. 

\subsection{Sketching with Symbolic Constraints}
Given a sketch $\mathtt{e}$, a designer can put constraints on $\mathtt{\textbf{?}_e}$. A \textit{symbolic constraint} $\mathtt{c}$ is a combination of multiple predicates $\mathtt{c:=\mu\ |\ \neg \mu\ |\ c_1\wedge c_2\ |\ c_1\vee c_2}$ where the atomic predicate $\mathtt{\mu: \mathcal{H}\rightarrow \{\mathtt{\top}, \mathtt{\bot}\}}$ is the basic building block. Under Boolean operations, the predicates follow the semantics \eqref{sem3_2} where the expression $\mu[\textbf{h}/\textbf{?}_e]$ substitutes $\mathtt{\textbf{?}_e}$ in $\mathtt{\mu}$ with $\mathtt{\textbf{h}}$ to output Boolean values. A satisfying implementation of the mapping $\mathtt{eval: \{\mathtt{\top}, \mathtt{\bot}\}\rightarrow}\mathbb{R}$ is $\mathtt{eval}(\cdot):=2\mathcal{I}(\cdot) - 1$ where  $\mathcal{I}:\{\mathtt{\top}, \mathtt{\bot}\}\rightarrow\{0, 1\}$ is an indicator function. A symbolic constraint $\mathtt{c}$ is satisfied if $\mathtt{[\![c]\!](\textbf{h})\geq 0}$. Table \ref{tab1} instantiates two predicates $\mathtt{c_1, c_2}$ for Example \ref{eg2_1}. Then $\mathtt{c=c_1\wedge c_2}$ can be a symbolic constraint for Example \ref{eg2_1}. For simplicity, we only consider conjunctions of atomic predicates in this paper. \zwccross{The conjunction of all the predicates constitutes a symbolic constraint.}
Now we are now ready to state the Programmatic Reward Design problem as in Definition \ref{def1}. 
\begin{table}[htbp]
\centering
\begin{tabular}{l|l}
    \textbf{Properties} & \textbf{Predicates} \\
    $\mathtt{[c_1]}$Reward reaching the goal & $\mathtt{\mathtt{\wedge^{5}_{\mathtt{id}=1}(\mathtt{?_{id}} \leq \mathtt{?_1})}}$ \\
    $\mathtt{[c_2]}$Penalize dropping unused key & $\mathtt{\mathtt{?_5+?_4\leq  0}}$\\
\end{tabular}
\caption{The correspondence between two desired properties and the predicates for the sketch in Fig.\ref{fig1_2}. }
\label{tab1}
\end{table}
 \begin{eqnarray}
& \mathtt{[\![\mu]\!](\textbf{h}) := eval(\mu[\textbf{h}/\textbf{?}_e])\qquad [\![\neg \mu]\!](\textbf{h}):=-[\![\mu]\!](\textbf{h})}&\nonumber\\
&\mathtt{[\![c_1\wedge c_2]\!](\textbf{h}):= \min ([\![c_1]\!](\textbf{h}), [\![c_2]\!](\textbf{h}))}& \nonumber\\
&\mathtt{[\![c_1\vee c_2]\!](\textbf{h}):= \max ([\![c_1]\!](\textbf{h}), [\![c_2]\!](\textbf{h}))}& \label{sem3_2}
\end{eqnarray}

\begin{definition} [\textbf{Programmatic Reward Design (PRD)}]\label{def1}
For an RL task, a {PRD} problem is a tuple $\mathtt{\langle e, \textbf{?}_e, \mathcal{H}, c\rangle}$ where $\mathtt{e}$ is a \textit{sketch} with \textit{holes} $\mathtt{\textbf{?}_e}$ that takes values from set $\mathtt{\mathcal{H}}$; $\mathtt{c}$ is a symbolic constraint. The solution to a {PRD} problem $\mathtt{\langle e, \textbf{?}_e, \mathcal{H},  c\rangle}$ is any valid program $l\mathtt{\triangleq e[\textbf{h}/\textbf{?}_e]}$ subject to $\mathtt{\textbf{h}\in \mathcal{H}\wedge [\![c]\!](\textbf{h})\geq 0}$.
\end{definition}
\subsection{Challenges in PRD}

We note that solving the PRD problem does not 
\camadd{guarantee} that the resulting reward function will be effective.
The challenge of assigning proper values to the holes is faced not only by PRD but also by other symbolic reward design approaches. We use Example~\ref{eg3_1} to illustrate \camadd{two} aspects of this challenge. 
\textit{A) Goal-driven rewards.} The reward function specified via logically guided approaches~\cite{hasanbeig2019logically} can usually be translated into transition systems similar to the one in Fig.\ref{fig3_1}. Many of those approaches end up only assigning non-trivial values to $?_1$ and $?_4$ while equalizing the rewards for all the other transitions. However, when $p_1$ and $p_4$ are extremely small, e.g. $p_1,p_4 \ll p_2, p_3$, such goal-driven reward assignment barely provide any guidance when the agent explores the environment. \noindent\textit{B) Unknown dynamics.} The reward shaping approach from \cite{ijcai2019-840} adopts reward structures similar to the one in Fig.\ref{fig3_1} but allocates rewards to all transitions {while ignoring the dynamics of the environment.} This approach may result in \zwccross{other }inefficiency in large environments. For instance, if $p_2p_4< p_1\ll p_2$, a global optimal policy would only aim at triggering one transition $A\rightarrow B$. However, the shaped reward may assign a non-trivial reward to $?_{2}$, causing the RL algorithm to spend excessive training episodes on a local-optimal policy that lingers \zwcchange{around}{over} $A\rightarrow C\rightarrow B$. 

Given a sketch such as the one in Fig.\ref{fig3_2}, a PRD designer would also face the those challenges. Thus we assume that a PRD problem is accompanied by a set of demonstrations that show how an expert can accomplish the task, similar to the setting of IRL.
These demonstrations will help narrow down the solutions to the PRD problem. We hereby propose Programmatic Reward Design by Example (PRDBE).

\section{Programmatic Reward Design by Example}

Similar to Bayesian GAIL introduced in the Background section, we consider a probabilistic inference perspective for formulating PRDBE. \camadd{We first introduce a term to correlate programmatic reward functions with trajectory distributions.} 
\begin{definition}[\textbf{Nominal Trajectory Distribution}]\label{def4_1}
Given a programmatic reward function $l$, a \textbf{nominal trajectory distribution} of $l$ is $\hat{p}(\tau|l)=p(\tau)\exp(l(\tau))$ where $p(\tau)$ is the probability of sampling $\tau$ under the passive dynamics of the environment; $l(\tau)$ is short for $\sum_t [\![l]\!](\tau)[t]$. Furthermore, a \textbf{normalized nominal trajectory distribution} of $l$ is $p(\tau|l)=p(\tau)\exp(l(\tau))/Z_l$ where $Z_l=\sum_\tau p(\tau)\hat{p}(\tau|l)$. 
\end{definition}
The nominal trajectory distribution $\hat{p}(\tau|l)$ can be viewed as a proxy of a possibly non-Markov policy $\pi_l(a^{(t)}|s^{(t)} )\propto \exp([\![l]\!](\tau)[t])$. Such a policy trivially minimizes $J_{RL}(\pi_l)$\cite{levine2018reinforcement}, the RL loss described in the Background section. Intuitively, we search for an $l^*$ such that $\pi_{l^*}$ \textit{matches} $\pi_E$. Given this intuition, we formally define the problem of Programmatic Reward Design by Example in Definition \ref{def4_2}. 
\begin{definition} [\textbf{Programmatic Reward Design by Example (PRDBE)}]\label{def4_2}
Given a set of expert demonstrated trajectories $E$
and a PRD problem $\mathtt{\langle e, \textbf{?}_e, \mathcal{H}, c\rangle}$,
the PRDBE problem $\mathtt{\langle  e, \textbf{?}_e, \mathcal{H}, c}, E\rangle$ is to find a solution $ l^*$ to the PRD problem such that for any $\tau$ the nominal trajectory distribution satisfies $\hat{p}(\tau|l^*)=p(\tau|l^*)= p_E(\tau)$ where $p_E$ is the probability of sampling $\tau$ from $E$.\li{why use $\equiv$?}
\end{definition}

 However, solving the PRDBE problem requires addressing the following challenges: \textit{a) the set of solutions to the PRD problem may not contain a satisfying solution $l^*$ for PRDBE, and b) the sketch may not be differentiable w.r.t the holes}. In other words, there may not exist a perfect solution to the PRDBE problem and  \zwcchange{existing neural network training routine}{gradient-based optimizations}\li{do you just mean gradient-based optimization?} may not be readily applicable to PRDBE. To overcome these issues, we propose a learning framework with a relaxed objective.

\subsection{A Generative Adversarial Learning Framework}
\li{italicize the most important ideas and results}
Our learning framework realizes the probability matching between $\hat{p}(\tau|l), p(\tau|l)$ and $p_E(\tau)$ in a \textit{generative-adversarial} fashion. 
It searches for an $l$ such that even the best discriminator represented with a reward function $f$ as mentioned in the Background section cannot distinguish trajectories sampled by $p_E$ from those by $p(\tau|l)$. 
\camadd{Given an intermediate, learned reward function $f$, }
\camadd{while Bayesian GAIL trains a $\pi_A$ to minimize the log-likelihood \eqref{eq1_0} of correct classifications between agent trajectory $\tau_A$ and expert trajectory $\tau_E$}, we learn an $l$ to maximize the log-likelihood of false classifications, i.e., $\log p(1_A, 0_E|l, f; E)=\log\sum_{\tau_A}  p(\tau_A|l)p(1_A|\tau_A; l, f)\sum_{\tau_E}p_E(\tau_E)p(0_E|\tau_E; l, f)$ with $p(1_A|\tau_A;l, f):=\prod^T_{t=0}\frac{exp(f(s^{(t)}_A,a^{(t)}_A))}{exp(f(s^{(t)}_A,a^{(t)}_A)) + \exp([\![l]\!](\tau_A)[t])}$ and $p(0_E|\tau_E;l, f):=\prod^T_{t=0}\frac{exp([\![l]\!](\tau_E)[t])}{exp(f(s^{(t)}_E,a^{(t)}_E)) + \exp([\![l]\!](\tau_E)[t])}$ being the \textit{discriminators}; $p(\tau_A|l)$ being the \textit{generator} of $\tau_A$'s. 
Since the $l$'s are non-differentiable, we do not directly optimize $l$. 
Recalling that $\mathcal{L}$ is the program space induced by $\mathcal{H}$, we optimize a sampler $q:\mathcal{L}\rightarrow [0, 1]$ to concentrate the distribution density on those candidate programs $l$'s that incur high values of $\log p(1_A, 0_E|l, f; E)$. 
Due to the introduction of $q$, the log-likelihood of false classification becomes $\log p(1_A, 0_E|q, f; E)\geq  \sum_{l\in\mathcal{L}}q(l)\log p(1_A, 0_E|l, f; E)$ which is further lower-bounded by \eqref{eq4_7}. We then introduce an agent policy $\pi_A$ for importance sampling of $p(\tau_A|l)$ in \eqref{eq4_8}. Thus $q$ and $\pi_A$ together constitute the \textit{generator}. 
After cancelling out the passive-dynamics induced $p(\tau_A)$ in the nominator and denominator as in \eqref{eq4_9}, \li{expose?}\zwcchange{the normalization constant $Z_l$ in \eqref{eq4_9} and then elide}{we handle the normalization constant $Z_l$}\li{elide?} via self-normalized importance sampling~\cite{mcbook} as in \eqref{eq4_10} by i.i.d. sampling a set $\{\tau_{A,i}\}^m_{i=1}$ of trajectories with $\pi_A$. We refer to \eqref{eq4_10} as the generative objective $J_{gen}(q)$.

\begin{eqnarray}
&&\sum_{l\in\mathcal{L}}q(l) \log p(1_A, 0_E|\pi_{l};l, f; E)\nonumber\\
&\geq&{\underset{l\sim q}{\mathbb{E}}}\Big\{\underset{\tau_A\sim p(\tau_A|l)}{ \mathbb{E}}\big[\log p(1_A|\tau_A; l,f)\big] + \nonumber\\
&& \qquad\qquad\qquad\quad \underset{\tau_E\sim p_E}{ \mathbb{E}}\big[ \log p(0_E|\tau_E; l, f)\big]\Big\}\label{eq4_7}\\
&= &{\underset{l\sim q}{\mathbb{E}}}\Big\{\underset{\tau_A\sim \pi_{A}}{ \mathbb{E}}\big[\frac{p(\tau_A|l)}{p(\tau_A|\pi_A)}\log p(1_A|\tau_A; l, f) \big]+ \nonumber\\
&& \qquad\qquad\qquad\quad \underset{\tau_E\sim p_E}{ \mathbb{E}}\big[\log p(0_E|\tau_E; l, f)\big]\Big\}\label{eq4_8}\\
&\geq & {\underset{l\sim q}{\mathbb{E}}} \Big\{\underset{\tau_A\sim \pi_A}{ \mathbb{E}}\Big[\frac{\exp(l(\tau_A))}{Z_l\pi_A(\tau_A)}\log p(1_A|\tau_A; l, f) \Big]+ \nonumber\\
&&\qquad\qquad\qquad\quad \underset{\tau_E\sim p_E}{ \mathbb{E}}\big[\log p(0_E|\tau_E; l, f)\big]\Big\}\label{eq4_9}\\
&\approx& {\underset{l\sim q}{\mathbb{E}}}\Big\{\frac{\sum\limits^m_{i=1}\big(\frac{\exp(l(\tau_{A,i}))}{\pi_A(\tau_{A,i})}\big) \log p(1_A|\tau_{A,i}; l, f)}{\sum\limits^m_{i=1}\frac{\exp(l(\tau_{A,i}))}{\pi_A(\tau_{A,i})}}+\nonumber\\
&& \qquad\qquad\qquad\quad \underset{{\tau_E\sim p_E}}{ \mathbb{E}}\big[\log p(0_E|\tau_E; l, f)\big]\Big\}\label{eq4_10}\\
&\triangleq &J_{gen}(q)\nonumber
\end{eqnarray}

Next, we note that the existence of symbolic constraint $\mathtt{c}$ imposes a prior $p(l|\mathtt{c})$ over the search space of $l$. We let $p(l|\mathtt{c})$ be a uniform distribution among the programs that do not violate $\mathtt{c}$, i.e., $\{l\triangleq\mathtt{e[\textbf{h}/\textbf{?}_e]}|\mathtt{[\![c]\!](\textbf{h})\geq 0}\}\subseteq \mathcal{L}$ while being {zero anywhere else}. Then the objective of our learning framework for $q$ becomes minimizing $D_{KL}\big(q(l)|| p(l| 1_A, 0_E, f; E, \mathtt{c})\big)$ where $p(l| 1_A, 0_E, f; E, \mathtt{c})=\frac{p(1_A, 0_E|l, f; E)p(l|\mathtt{c})}{p(1_A, 0_E|f;E)}$. We minimize this KL-divergence by maximizing its evidence lower-bound (ELBO) as in \eqref{eq4_11} which equals the sum of $J_{gen}(q)$, an entropy term $H(q)$ and a $J_{\mathtt{c}}(q)\triangleq\underset{l\sim q}{\mathbb{E}}[\log p(l|\mathtt{c})]$. 
\begin{eqnarray}
ELBO(q)&=& J_{gen}(q) - D_{KL}\big(q(l)||p(l|\mathtt{c})\big)\nonumber\\
&=& H(q) + J_{c}(q) + J_{gen}(q)\label{eq4_11}
\end{eqnarray}

In our implementation, \camadd{the holes in the designed sketches are all unknown reals, i.e., $\mathcal{H}=\mathbb{R}^{|\textbf{?}_e|}$.} Rather than sampling from $\mathcal{L}$, we can use a neural network $q_\varphi$ to specify the mean and variance of a $|\mathtt{\textbf{?}_e}|$-dimensional multivariate Gaussian distribution in $\mathcal{H}$. 
We use $q_\varphi(l)$ to denote the probability of this Gaussian distribution producing an $\mathtt{\textbf{h}\in \mathcal{H}}$ such that $l=\mathtt{ e[\textbf{h}/\textbf{?}_e]}$. The mean of the Gaussian corresponds to the most likely $l^*$ which we use to train a neural network policy $\pi_\phi$ as the agent policy $\pi_A$ in \eqref{eq4_10}. To calculate the gradient of $J_{gen}$, we use the logarithmic trick~\cite{peters2008reinforcement}, i.e., $\nabla_{\varphi_n}{\mathbb{E}}_{l\sim q_{\varphi_n}}[\cdot]\approx \frac{1}{K}\sum^K_{k=1} \nabla_{\varphi_n}\log q_{\varphi_n}(l_k)[\cdot]$. \camadd{We note that \camadd{$J_c(q)$ is $-\infty$ once the support of $q$ contains an $l$ that violates $\mathtt{c}$}. Since the support of the Gaussian specified by $q_\varphi$ is the real space, $J_c(q_\varphi)$ can always be $-\infty$. 
Hence, we relax $J_c$ by using a ReLU function and a cross-entropy loss to only penalize $q_\varphi$ if $l^*$ violates $\mathtt{c}$.}
We also train a neural reward function $f_\theta$ as the $f$ in \eqref{eq4_10} by maximize an adversarial objective $J_{adv}(f_\theta)$ that differs from the generative objective $J_{gen}$ by changing $1_A, 0_E$ into $0_A, 1_E$ {such that} $p(0_A|\tau_A;l, f_\theta):=\prod^T_{t=0}\frac{exp([\![l]\!](\tau_A)[t])}{exp(f_\theta(s^{(t)}_A,a^{(t)}_A)) + \exp([\![l]\!](\tau_A)[t])}$ and $p(1_E|\tau_E;l, f_\theta):=\prod^T_{t=0}\frac{f_\theta(s^{(t)}_E, a^{(t)}_E)}{exp(f_\theta(s^{(t)}_E,a^{(t)}_E)) + \exp([\![l]\!](\tau_E)[t])}$. The algorithm is summarized in Algorithm 1 and depicted in Fig.~\ref{fig4_2}. In a nutshell, this algorithm iteratively samples trajectories with $\pi_\phi$, train $\pi_\phi$ with $l^*$, then update $f_\theta$ and $q_\varphi$ respectively.


\begin{algorithm}[tb]
\caption{Generative Adversarial PRDBE}
\label{alg:algorithm}
\textbf{Input}: PRDBE tuple $\mathtt{\langle e,\textbf{?}_e, \mathcal{H}, c},E\rangle$, iteration number $N$, sample number $m, K$, optimization parameters $\alpha, \beta$ \\
\textbf{Output}: $l^*, \pi^*$
\begin{algorithmic}[1] 
\STATE \textbf{initialization}: program space $\mathtt{\mathcal{L}=\{e[\textbf{h}/\textbf{?}_e]|\textbf{h}\in\mathcal{H}\}}$; an agent policy $\pi_{\phi_0}$; neural reward function $f_{\theta_0}$; sampler $q_{\varphi_0}:\mathcal{L}\rightarrow [0, 1]$
\WHILE {iteration number $n=0, 1, \ldots, N$}
\STATE Sample trajectory set $\{\tau_{A,i}\}^m_{i=1}$ by using policy $\pi_{\phi_n}$
\STATE Calculate rewards $\{[\![l^*_n]\!](\tau_{A,i})\}^m_{i=1}$ with the most likely program $l^*_n=\arg\underset{l}{\max}\ q_{\varphi_n}(l)$
\STATE Update $\phi_n$ to $\phi_{n+1}$ using policy learning algorithm, e.g. PPO~\cite{DBLP:journals/corr/SchulmanWDRK17}
\STATE Sample $K$ programs $\{l_k\}^K_{i=1}$ by using $q_{\varphi_n}$
\STATE Calculate rewards $\{\{[\![l_k]\!](\tau_{A,i})\}^m_{i=1}\}^K_{k=1}$
\STATE Update $\theta_{n+1}\leftarrow \theta_{n} + \alpha \nabla_{\theta} J_{adv}(f_{\theta_n})$
\STATE Update $\varphi_{n+1}\leftarrow \varphi_n + \beta \nabla_{\varphi} ELBO(q_{\varphi_n})$ 
\ENDWHILE
\STATE \textbf{return} $l^*:=\arg\underset{l}{\max}\ q_{\varphi_N}(l)$ and $\pi^*:=\pi_{\phi_N}$
\end{algorithmic}
\end{algorithm}
\begin{figure}
     \centering
         \centering
         \includegraphics[height=3cm, width=7cm]{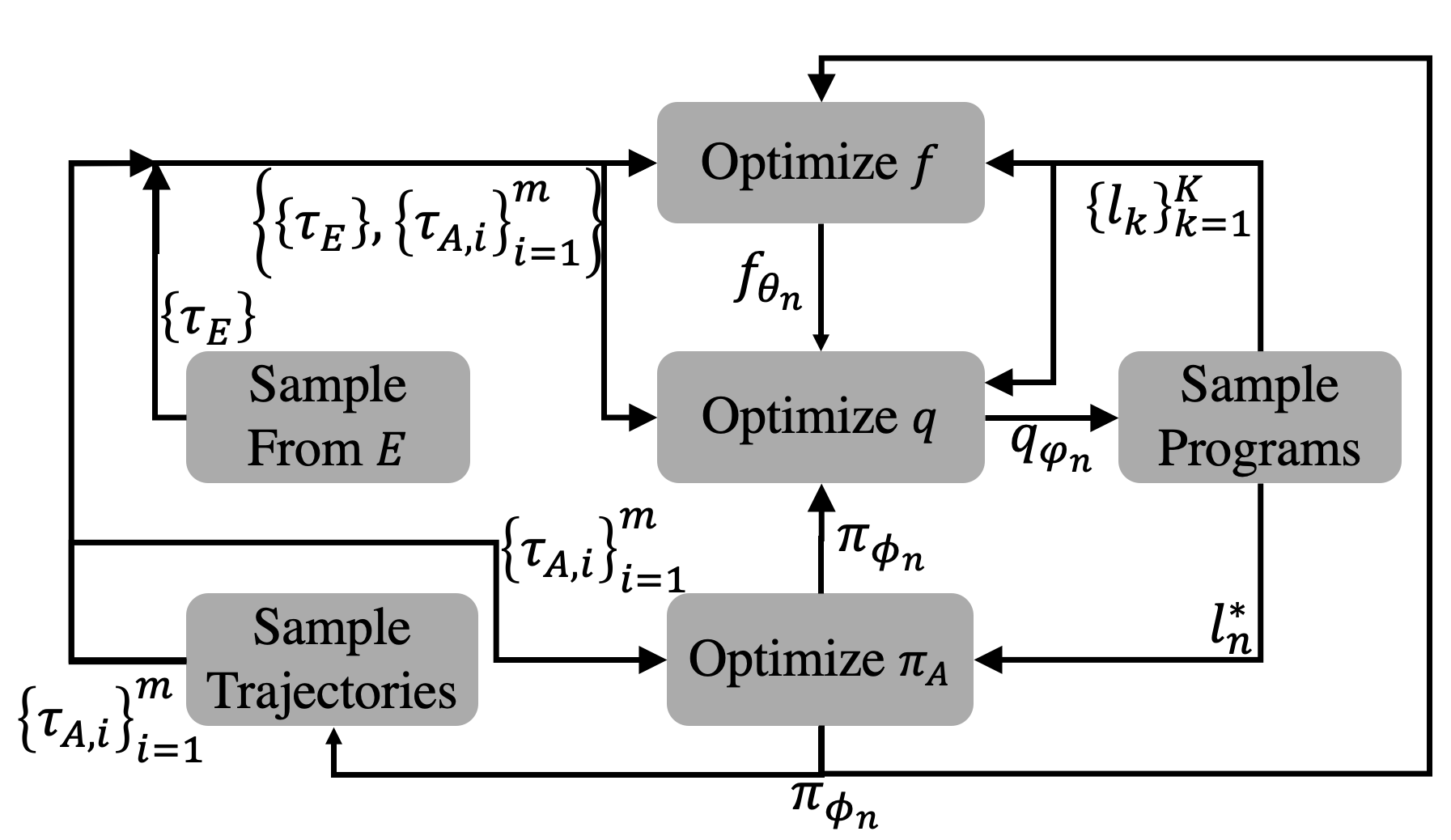}
         \caption{Flowchart for our learning framework}
         \label{fig4_2}
\end{figure}

\section{Experiments}
\begin{figure*}[t]
     \centering
     \begin{subfigure}[b]{0.24\textwidth}
         \centering
         \includegraphics[height=3.cm,width=3.2cm]{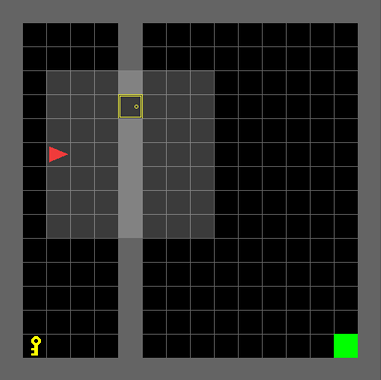}
         \caption{DoorKey-16x16}
         \label{fig5_1}
     \end{subfigure}
     \begin{subfigure}[b]{0.24\textwidth}
         \centering
         \includegraphics[height=3.1cm, width=4.5cm]{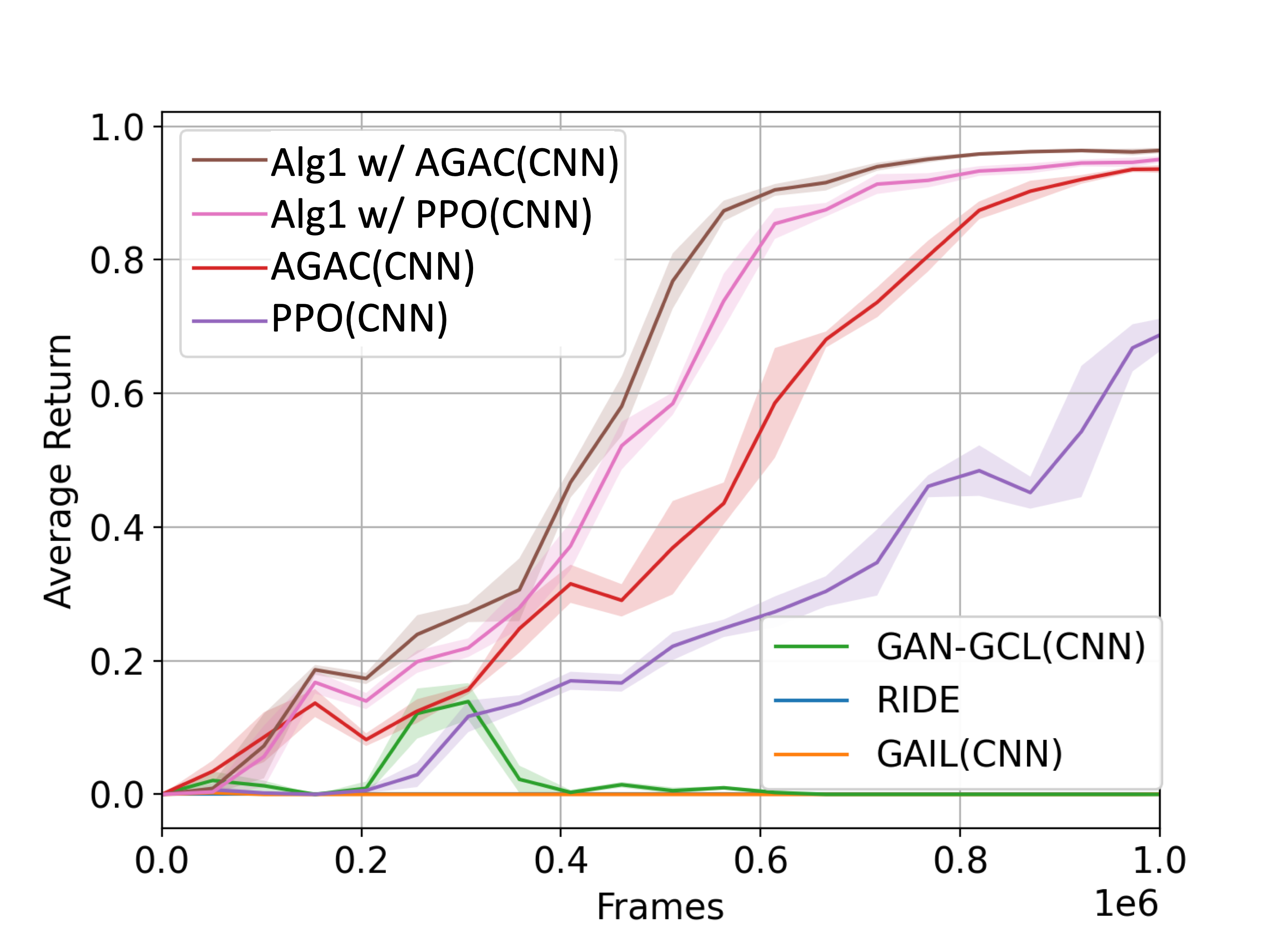}
         \caption{DoorKey-8x8-v0}
         \label{fig5_7}
     \end{subfigure}
     \begin{subfigure}[b]{0.24\textwidth}
         \centering
         \includegraphics[height=3.1cm, width=4.5cm]{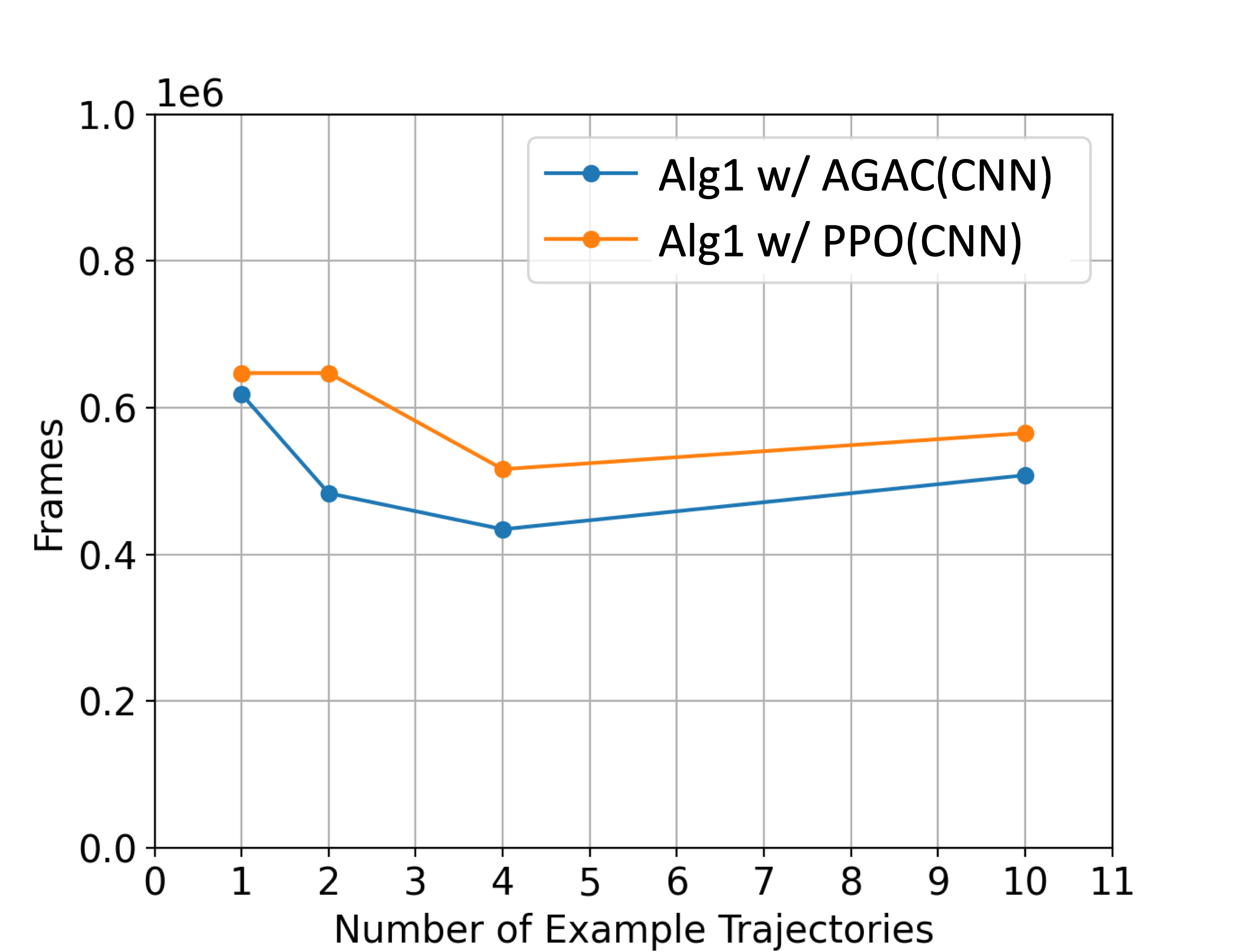}
         \caption{DoorKey-8x8-v0}
         \label{fig5_8}
     \end{subfigure}
      \begin{subfigure}[b]{0.24\textwidth}
         \centering
         \includegraphics[height=3.1cm, width=4.5cm]{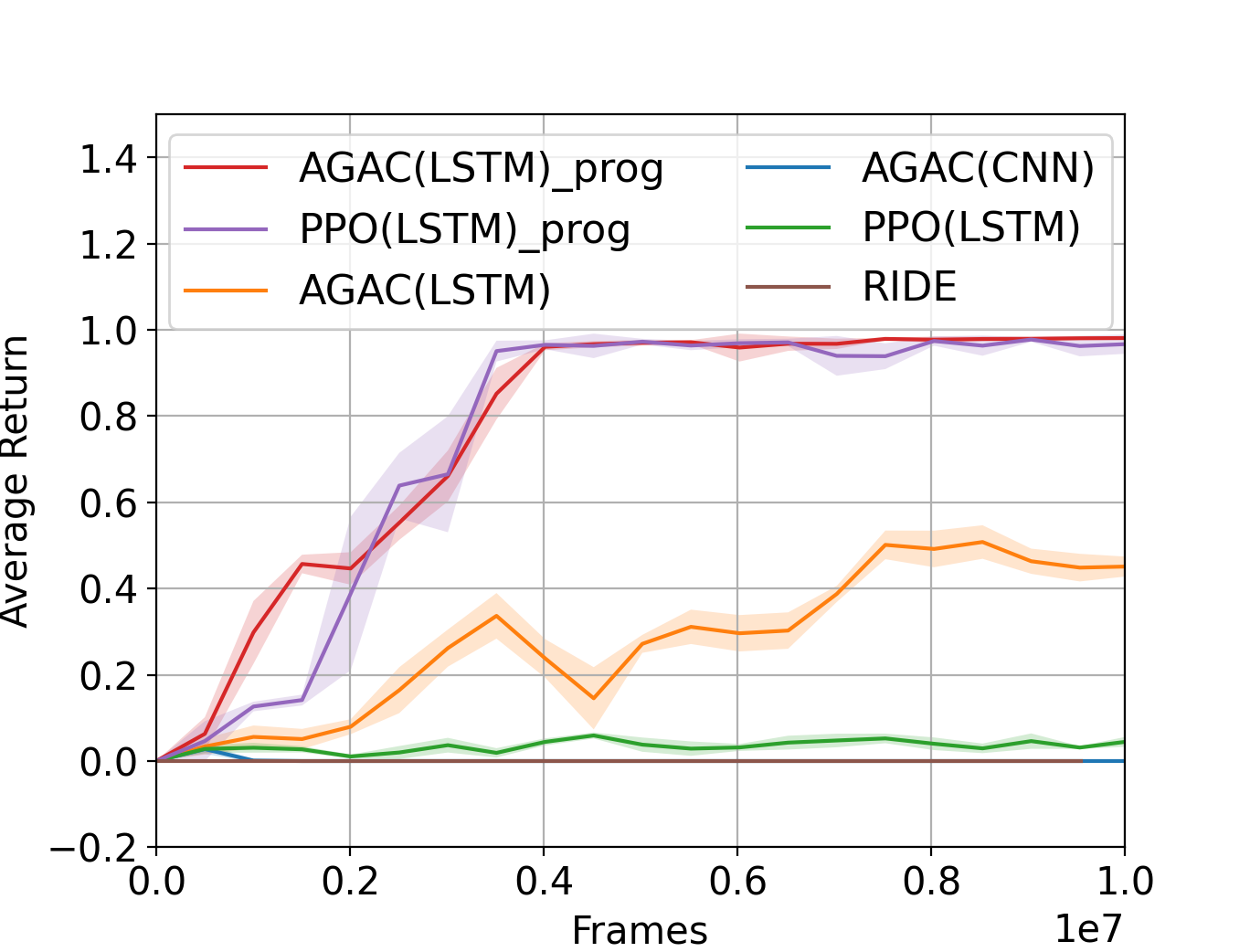}
         \caption{DoorKey-16x16-v0}
         \label{fig5_9}
     \end{subfigure}
      \begin{subfigure}[b]{0.24\textwidth}
         \centering
         \includegraphics[height=3.cm,width=3.2cm]{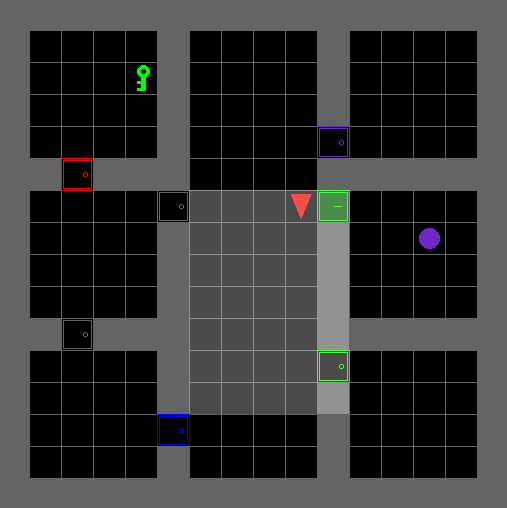}
         \caption{KeyCorridorS6R3}
         \label{fig5_1}
     \end{subfigure}
     \begin{subfigure}[b]{0.24\textwidth}
         \centering
         \includegraphics[height=3.1cm, width=4.5cm]{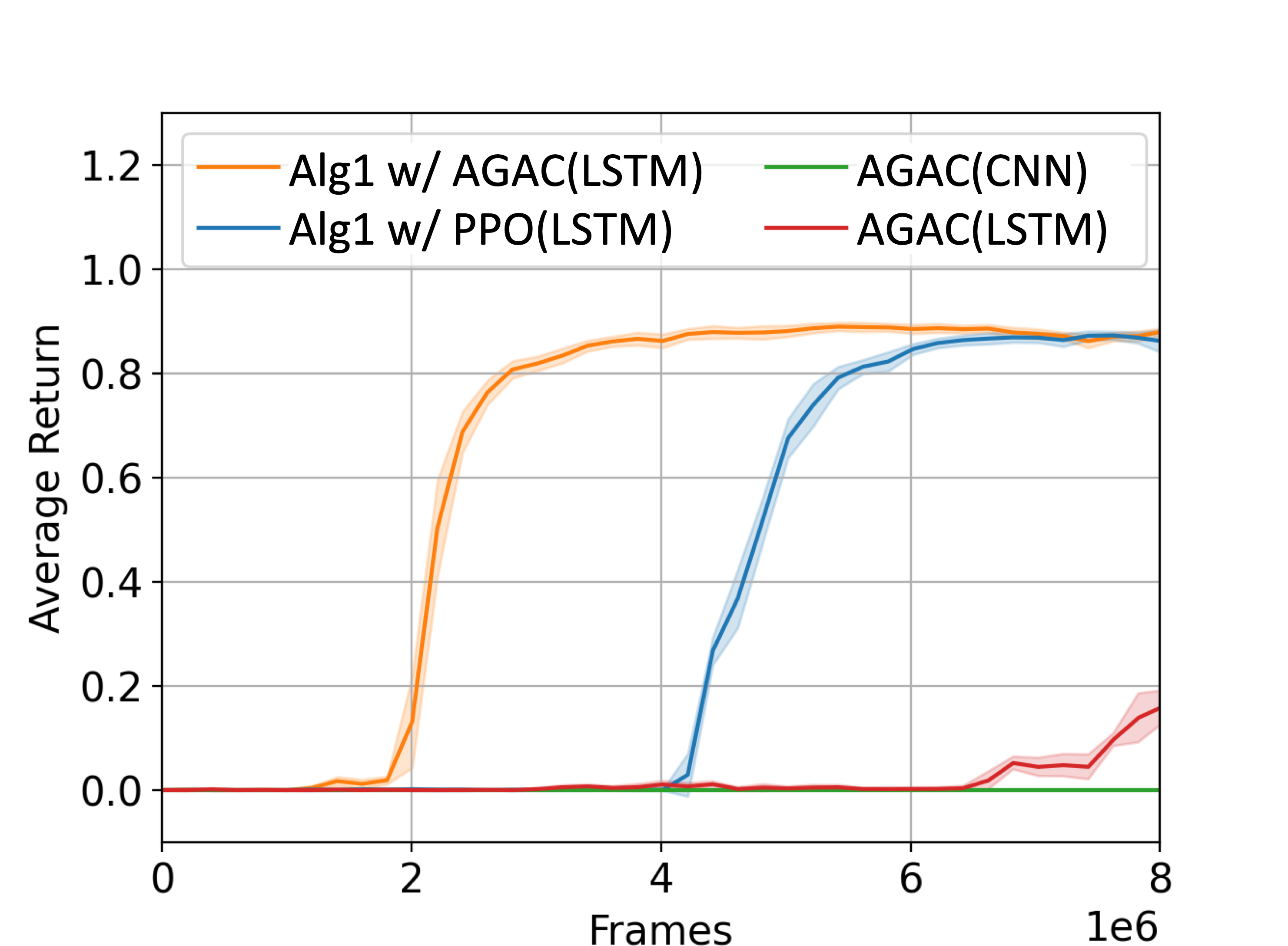}
         \caption{KeyCorridorS3R3}
         \label{fig5_3}
     \end{subfigure} 
     \begin{subfigure}[b]{0.24\textwidth}
         \centering
         \includegraphics[height=3.1cm, width=4.5cm]{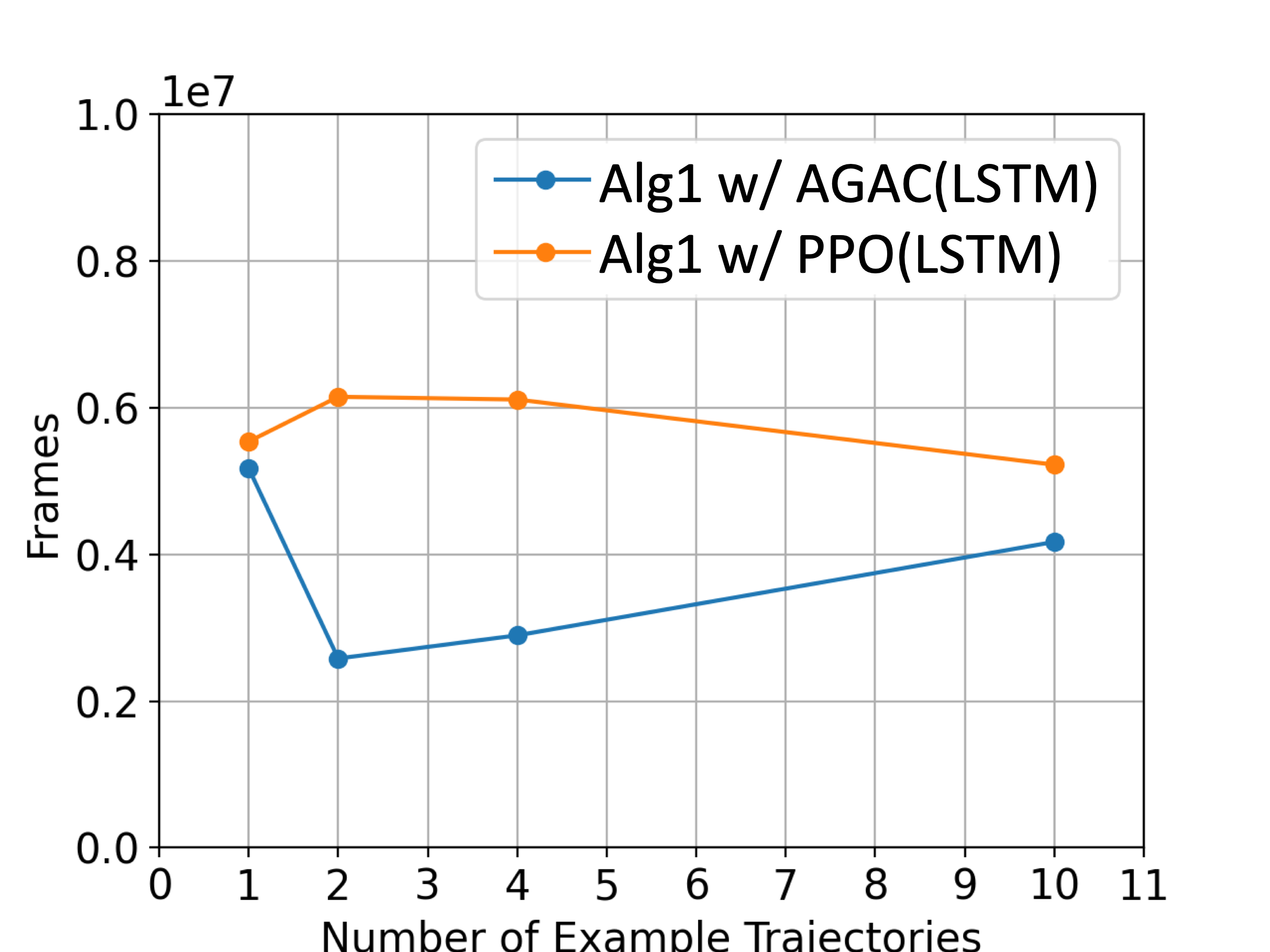}
         \caption{KeyCorridorS3R3}
         \label{fig5_4}
     \end{subfigure}
      \begin{subfigure}[b]{0.24\textwidth}
         \centering
         \includegraphics[height=3.1cm, width=4.5cm]{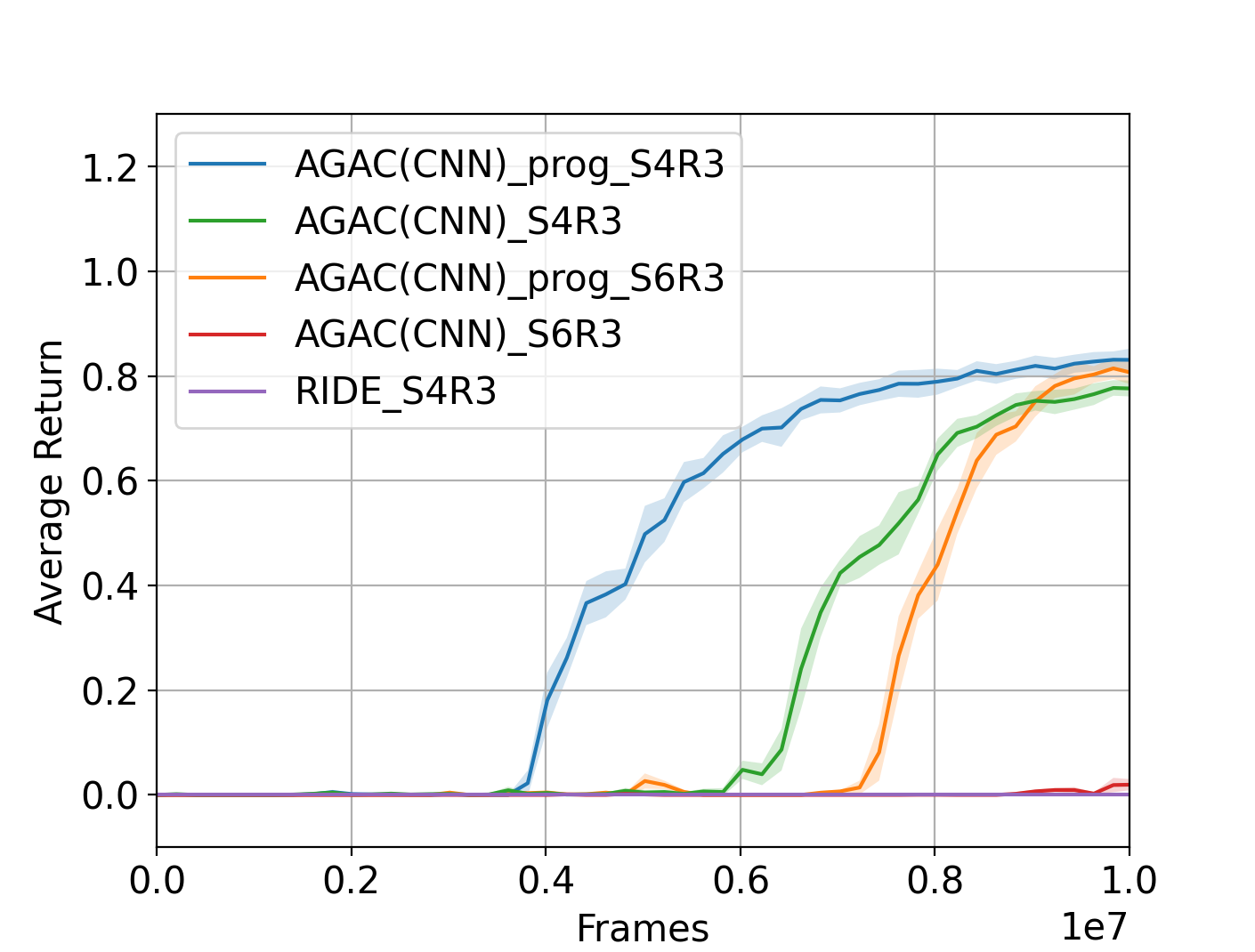}
         \caption{KeyCorridorS4/6R3}
         \label{fig5_5}
     \end{subfigure} 
     \begin{subfigure}[b]{0.24\textwidth}
         \centering
         \includegraphics[height=3.cm,width=3.2cm]{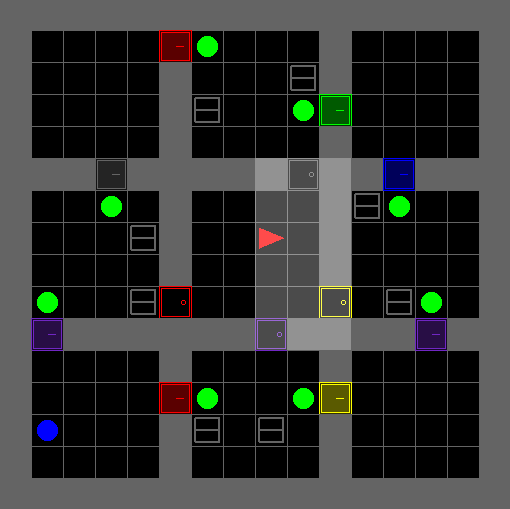}
         \caption{ObstructedMaze-Full}
         \label{fig5_2}
     \end{subfigure}
     \begin{subfigure}[b]{0.24\textwidth}
         \centering
         \includegraphics[height=3.1cm, width=4.5cm]{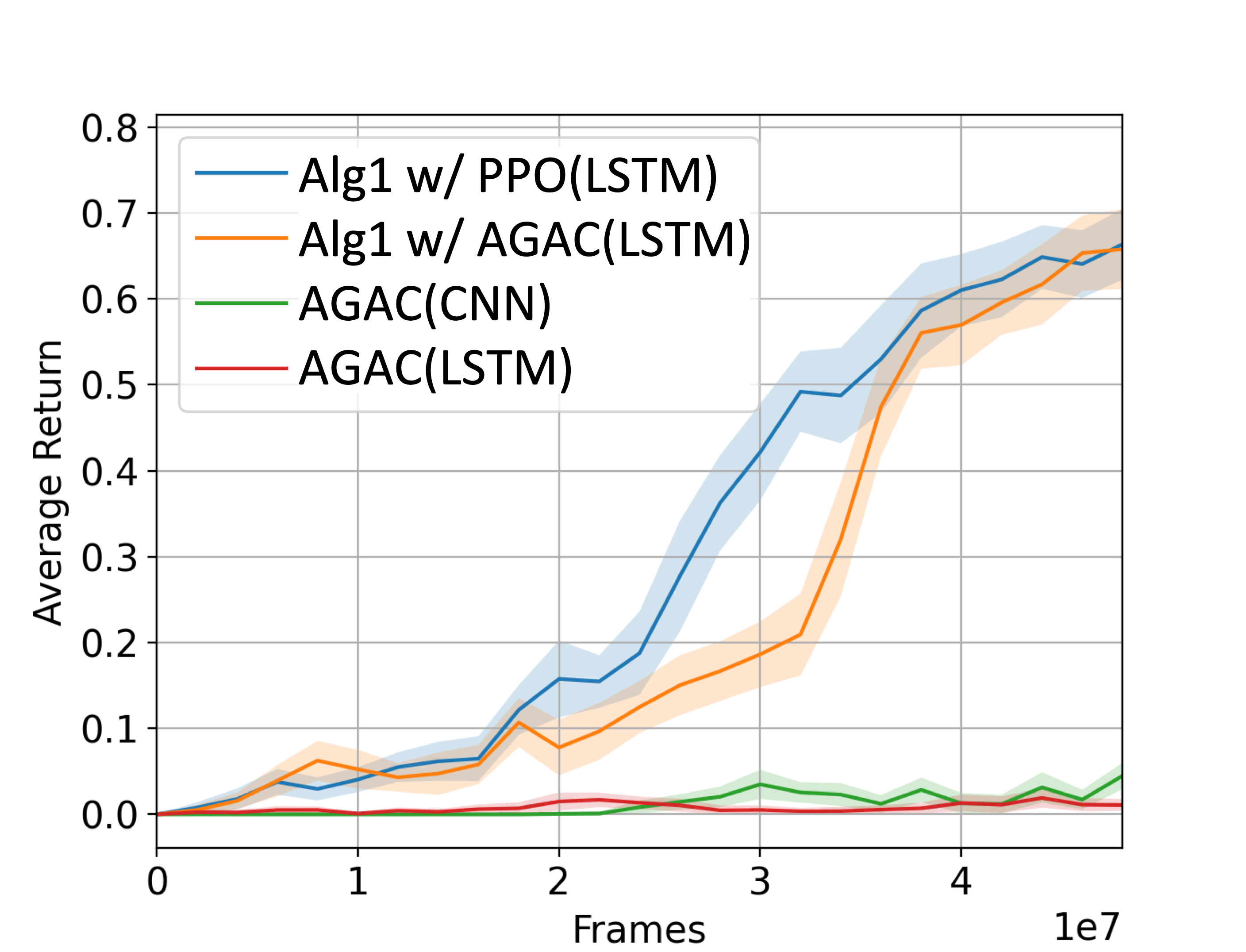}
         \caption{ObstructedMaze-2Dlhb}
         \label{fig5_6}
     \end{subfigure}
      \begin{subfigure}[b]{0.24\textwidth}
         \centering
         \includegraphics[height=3.1cm, width=4.5cm]{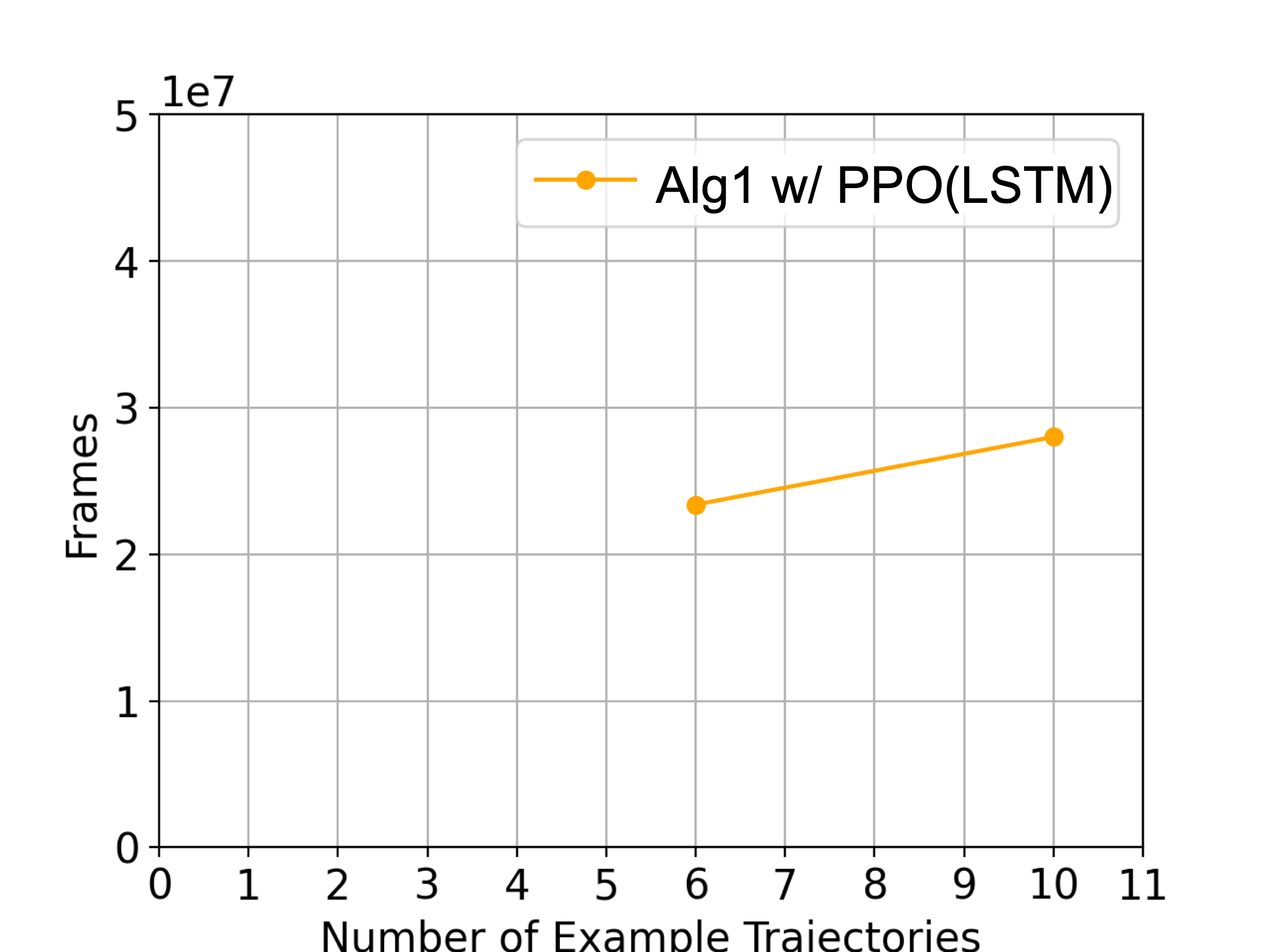}
         \caption{ObstructedMaze-2Dlhb}
         \label{fig5_11}
     \end{subfigure} 
     \begin{subfigure}[b]{0.24\textwidth}
         \centering
         \includegraphics[height=3.1cm, width=4.5cm]{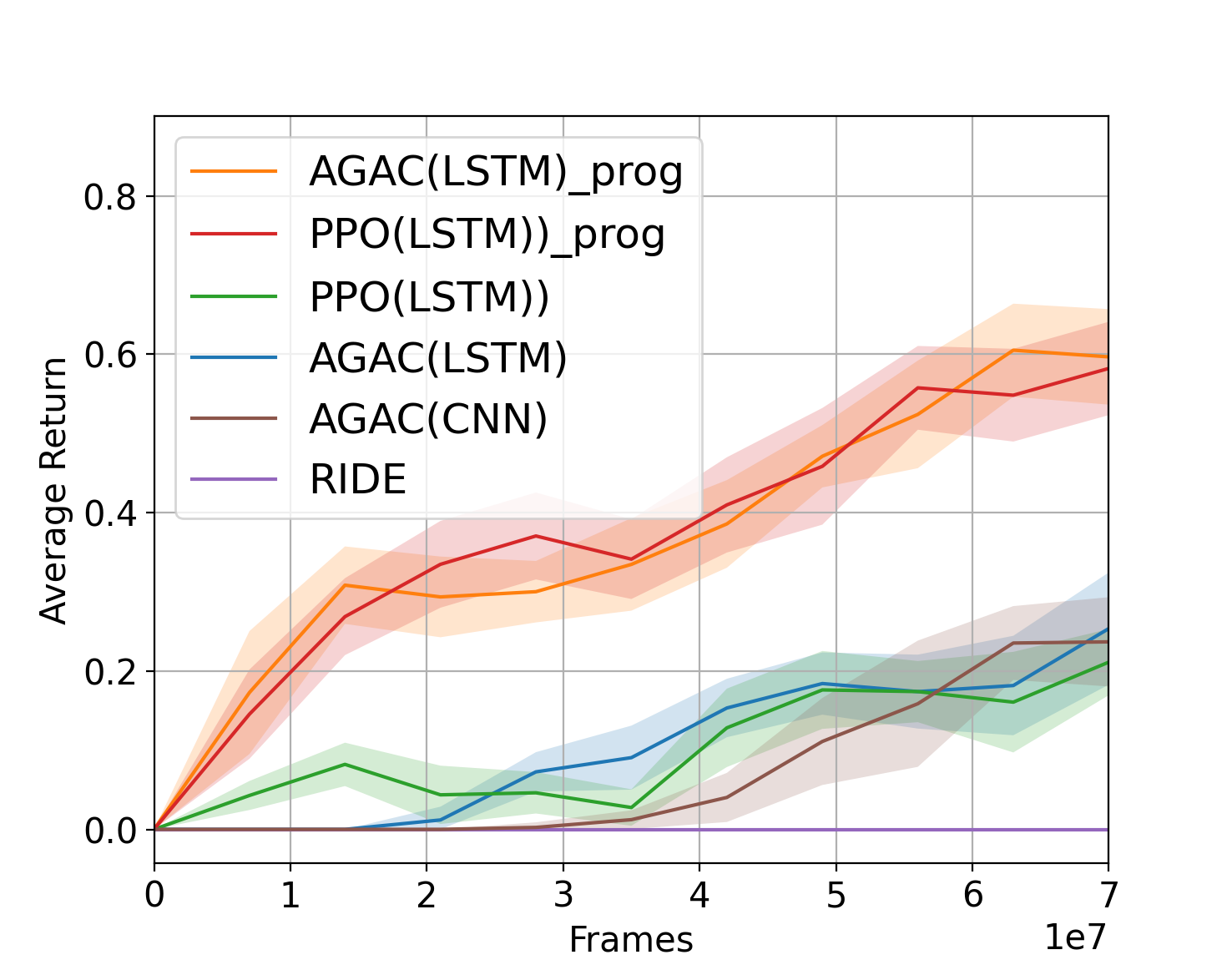}
         \caption{ObstructedMaze-Full}
         \label{fig5_10}
     \end{subfigure} 
     \caption{Frames are number of interactions with the environment. Average return is the average default reward achieved over a series of episodes and is no larger than $1$. Alg1 w/ AGAC/PPO indicates using AGAC or PPO as the policy learning algorithm in line 5 of Algorithm 1. AGAC/PPO\textit{\_prog} indicates training an AGAC or PPO agent with reward provided by a programmatic reward function \textit{prog}. CNN and LSTM indicate the structures of the actor-critic netowrks.} 
\end{figure*}
In this section, we experimentally investigate: \textbf{A. Performance:} whether {Algorithm 1} can efficiently train an agent policy to attain high performance; \textbf{B. Example Efficiency:} whether Algorithm 1 can work with only a few demonstrated trajectories;  \textbf{C. Generalization:} whether the programs learned via Algorithm 1 in one environment can generalize across different environments of the same task. 

\noindent\textbf{Benchmark.} We select from the MiniGrid environments~\cite{gym_minigrid} three challenging RL tasks in ascending order of difficulty: the basic setup and the default reward function for the \textit{Door-Key} task have been described in Example \ref{eg2_1}; \textit{KeyCorridor} shown in Fig.\ref{fig5_1} and \textit{ObstructedMaze} in Fig.\ref{fig5_2} both require the agent to travel from room to room to find the key(s) to pick up a colored ball in some locked room. In KeyCorridor, all but one room are unlocked. In ObstructedMaze, most of the rooms are locked and obstructed by green balls\zwcchange{; furthermore,}{ and} the keys are hidden in grey boxes which the agent must open first. Note that the agent can only carry one object at a time, which makes picking up and dropping the same objects multiple times almost inevitable. 
\zwccross{The agent's 7x7 field of vision makes the state space overwhelmingly larger than it appears.}  
The environments can vary in size by changing the number of rooms and tiles (e.g. DoorKey-8x8 vs. DoorKey-16x16). 
The placements of the objects and doors are randomized in each instance of an environment (e.g. ObstructedMaze-Full). 
\zwccross{as those in Fig.\ref{fig1_2}.} 
By default, the environment only returns a reward when the agent reaches the goal tile or picks up the targeted ball, making exploration in the environment particularly challenging. Besides, since only the $7\times7$ tiles in front of the agent are observable, memorization also turns out to be challenging. Those environments have been extensively used to benchmark exploration-driven, curiosity-driven and intrinsic reward driven RL agents. In this paper, we use programmatic reward function to train RL agents to accomplish those tasks.

\noindent\textbf{Baselines.} \camadd{We compare Algorithm 1 with IRL algorithms, GAN-GCL~\cite{fu2017learning} and GAIL~\cite{ho2016generative} to answer question \textbf{A}. We use PPO~\cite{DBLP:journals/corr/SchulmanWDRK17}, and AGAC~\cite{flet-berliac2021adversarially} for RL training in line 5 of Algorithm 1. 
We answer question \textbf{B} by varying the number of demonstrated trajectories when running Algorithm 1. We answer question \textbf{C} by using the programmatic reward functions learned via Algorithm 1 in small environments to train RL agents in larger environments (the results are annotated with AGAC/PPO\textit{\_prog}). In all three tasks, we provide the results of running the aforementioned RL algorithms as well as an intrinsic-reward augmented RL algorithm, RIDE~\cite{Raileanu2020RIDE:}, with the default rewards.} The sketches and symbolic constraints are in a similar form to those described in Example \ref{eg2_1}. We implement an LSTM reward function $f_\theta$ for Algorithm 1 and IRL baselines; an MLP $q_\varphi$; a CNN version and an LSTM version of the actor-critic RL agent $\pi_\phi$ respectively but only report the version with the higher performance. 

\noindent \textbf{DoorKey.} We use $10$ example trajectories demonstrated in a DoorKey-8x8 environment.
The PRDBE problem and its solution are both annotated as \textit{prog}. 
In Fig.\ref{fig5_7}, running Algorithm 1 by using PPO and AGAC in line 5 respectively produces a high-performance policy with fewer frames than by training PPO or AGAC with the default reward. 
On the other hand, RIDE, GAN-GCL and GAIL all fail with close-to-zero returns.
In Fig.\ref{fig5_8} we reduce the number of examples from $10$ to $1$ and it does not affect
the number of frames that Algorithm 1 needs to produce a policy with average return of at least $0.8$, regardless of whether PPO or AGAC is used in line 5. \camadd{ This shows that Algorithm 1 is \textit{example efficient}.}
In Fig.\ref{fig5_5}, we use the learned program to train PPO and AGAC agents in a larger DoorKey environment and achieve high performances with fewer frames than training PPO, AGAC or RIDE with the default reward.\camadd{This shows that the learned programmatic reward \textit{generalizes} well across environments.}

\noindent \textbf{KeyCorridor.} We use $10$ example trajectories demonstrated in a $6\times6$ KeyCorridorS3R3 environment. 
In Fig.\ref{fig5_3}, by using PPO and AGAC in line 5 of Algorithm 1, we respectively obtain high performance with significantly fewer frames than by training AGAC with the default reward.  
We omit GAIL and GAN-GCL because both fail in this task. As shown in Fig.\ref{fig5_4}, reducing the number of examples (to $1$) does not affect the performance of Algorithm 1. In Fig.\ref{fig5_5}, we use the learned program to train AGAC agents in two larger environments, $10\times 10$ KeyCorridorS4R3 and $16\times16$ KeyCorridorS6R3, and achieve high performances with fewer frames than training AGAC with the default reward. Note that when using the default reward, AGAC has the prior SOTA performance for this task.

\noindent \textbf{ObstructedMaze.} We use $10$ example trajectories demonstrated in a two-room ObstructedMaze-2Dhlb environment. 
When training the PPO agent, we discount PPO by episodic state visitation counts as in many exploration-driven approaches including AGAC. 
In Fig.\ref{fig5_6} we show that Algorithm 1 can produce high-performance policies with fewer frames than AGAC trained with the default reward. Since AGAC(CNN) is the SOTA for this task, we do not show the results of other methods. In Fig.\ref{fig5_5}, we use the learned program to train AGAC agents in a larger environment with as many as 9 rooms, ObstructedMaze-Full, and achieve higher performances with fewer frames than other methods trained with the default reward. Regarding the number of demonstrations, as shown in Fig.\ref{fig5_11}, the performance of Algorithm 1 does not change much when the number is decreased to $6$, but drops drastically when the number is further decreased possibly due to the higher complexity of this task. 

\begin{figure*}
     \centering
     \begin{subfigure}[b]{0.32\textwidth}
         \centering
        \includegraphics[height=4.1cm, width=5.5cm]{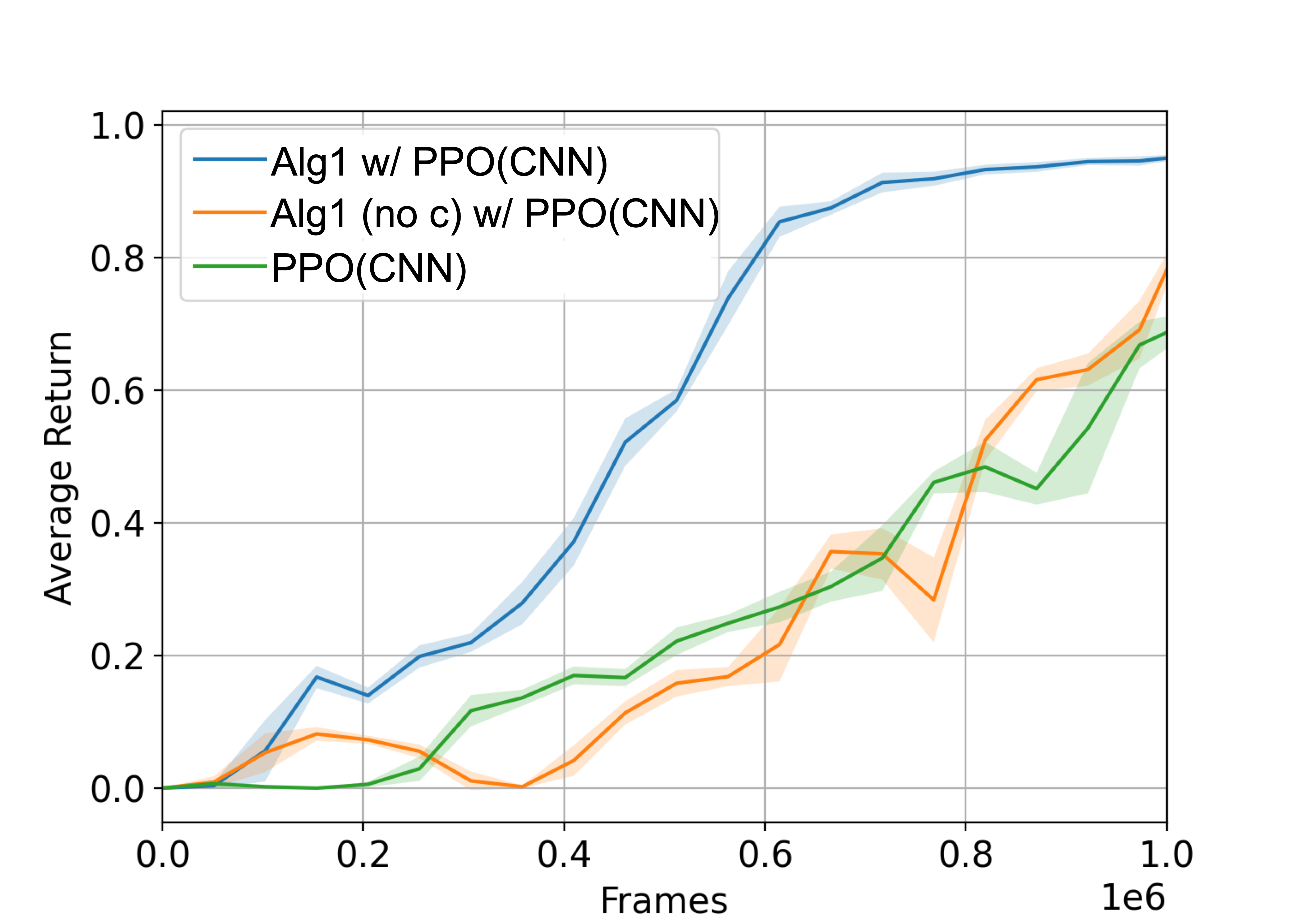}
         \caption{DoorKey-8x8}
         \label{fig6_3}
     \end{subfigure}  
     \begin{subfigure}[b]{0.32\textwidth}
         \centering
        \includegraphics[height=4.1cm, width=5.5cm]{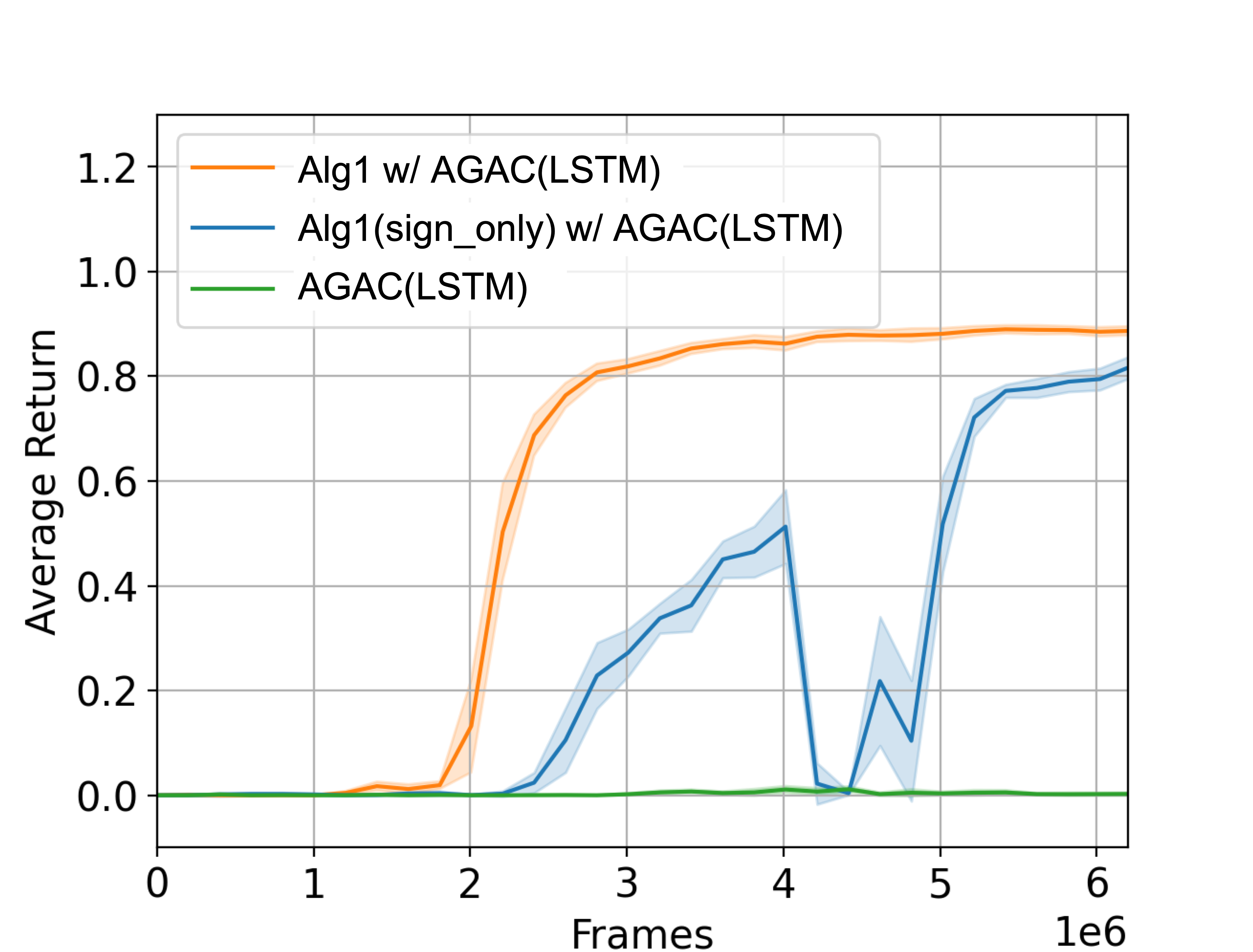}
         \caption{KeyCorridorS3R6}
         \label{fig6_8}
     \end{subfigure}    
 \caption{ Alg1 w/ AGAC/PPO indicates using AGAC or PPO as the policy learning algorithm in line 5 of Algorithm 1; Alg1(no c) w/ +PPO(CNN) indicates that running Algorithm 1 without considering its symbolic constraint while using PPO(CNN) in line 5; Alg1(sign\_only) w/ + AGAC(LSTM) indicates that running Algorithm 1 with only non-relational constraint while using AGAC(LSTM) in line 5; CNN and LSTM indicate the structures of the actor-critic networks.}
 \end{figure*}
\begin{figure*}
     \centering
         \begin{subfigure}[b]{0.32\textwidth}
         \centering
         \includegraphics[height=4.1cm, width=5.5cm]{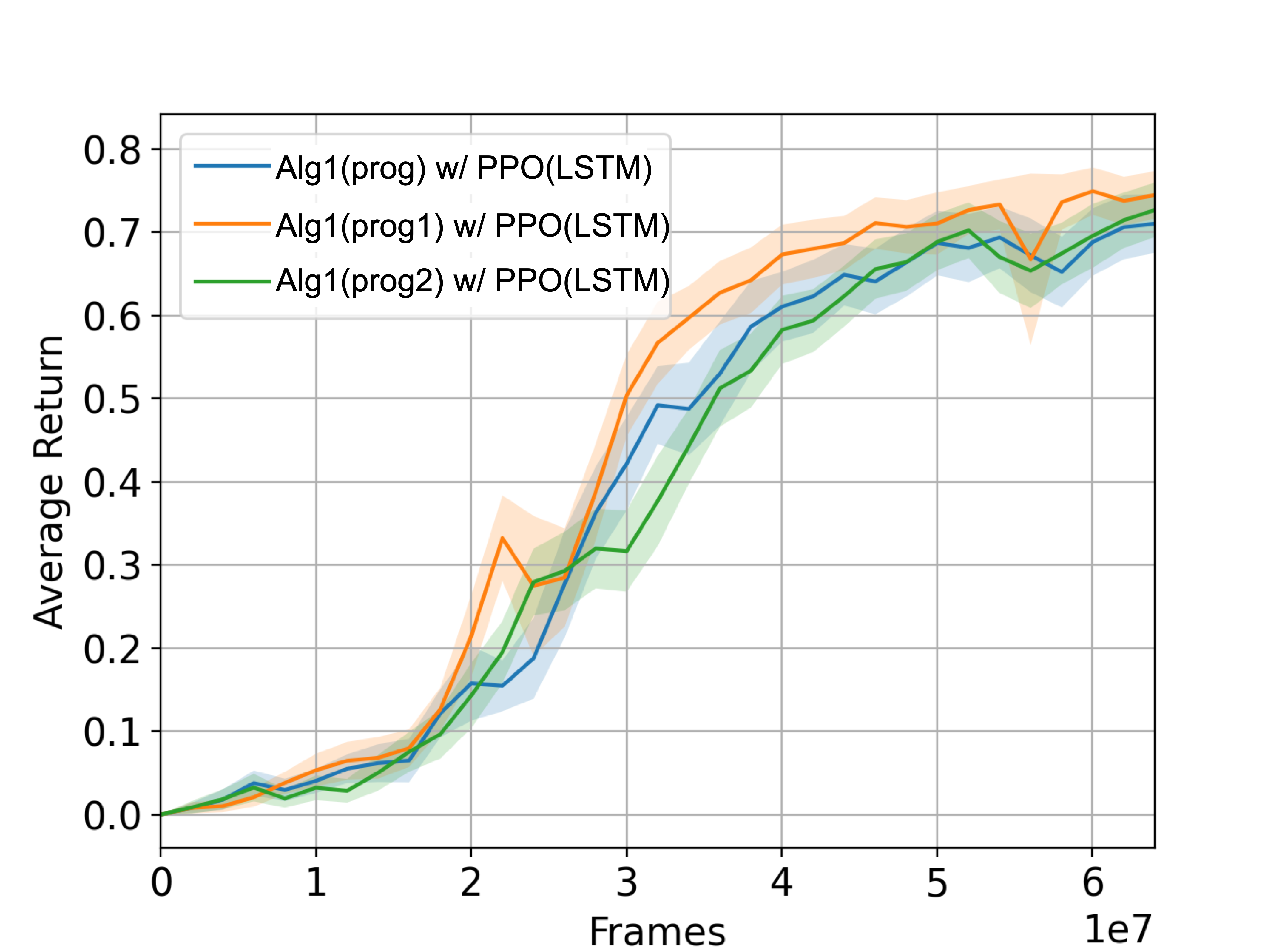}
         \caption{ObstructedMaze-2Dhlb}
         \label{fig6_6}
     \end{subfigure}
    \begin{subfigure}[b]{0.32\textwidth}
         \centering
         \includegraphics[height=4.1cm, width=5.5cm]{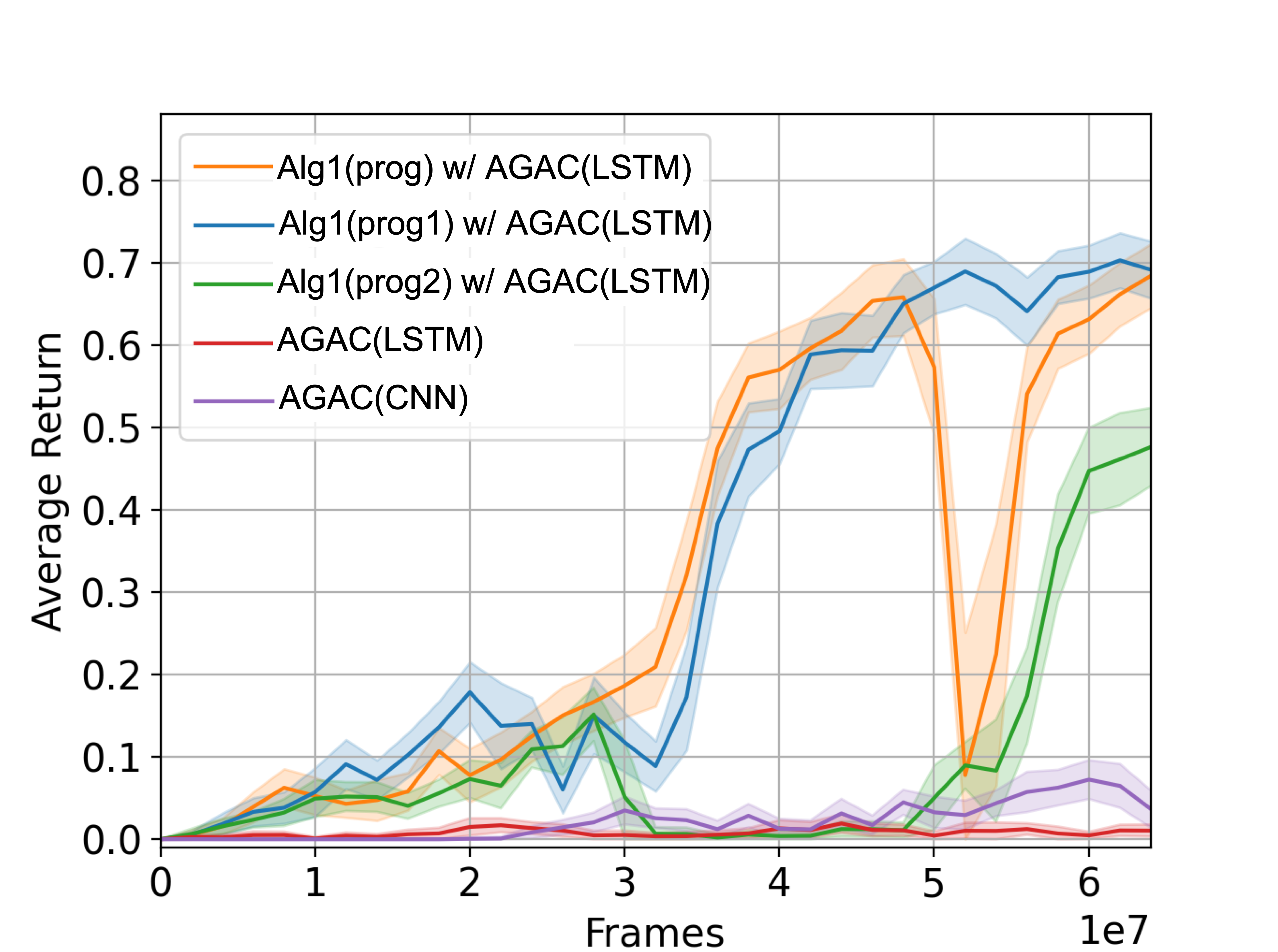}
         \caption{ObstructedMaze-2Dhlb}
         \label{fig6_7}
          \end{subfigure}
     \caption{ Alg1(prog/prog1/prog2) w/ AGAC/PPO indicate using AGAC or PPO as the policy learning algorithm in line 5 of Algorithm 1 while using different program sketches which are annotated with prog, prog1 and prog2. CNN and LSTM indicate the structures of the actor-critic networks.} 
\end{figure*}

\begin{figure*}
     \centering
    \begin{subfigure}[b]{0.32\textwidth}
         \centering
         \includegraphics[height=4.1cm, width=5.5cm]{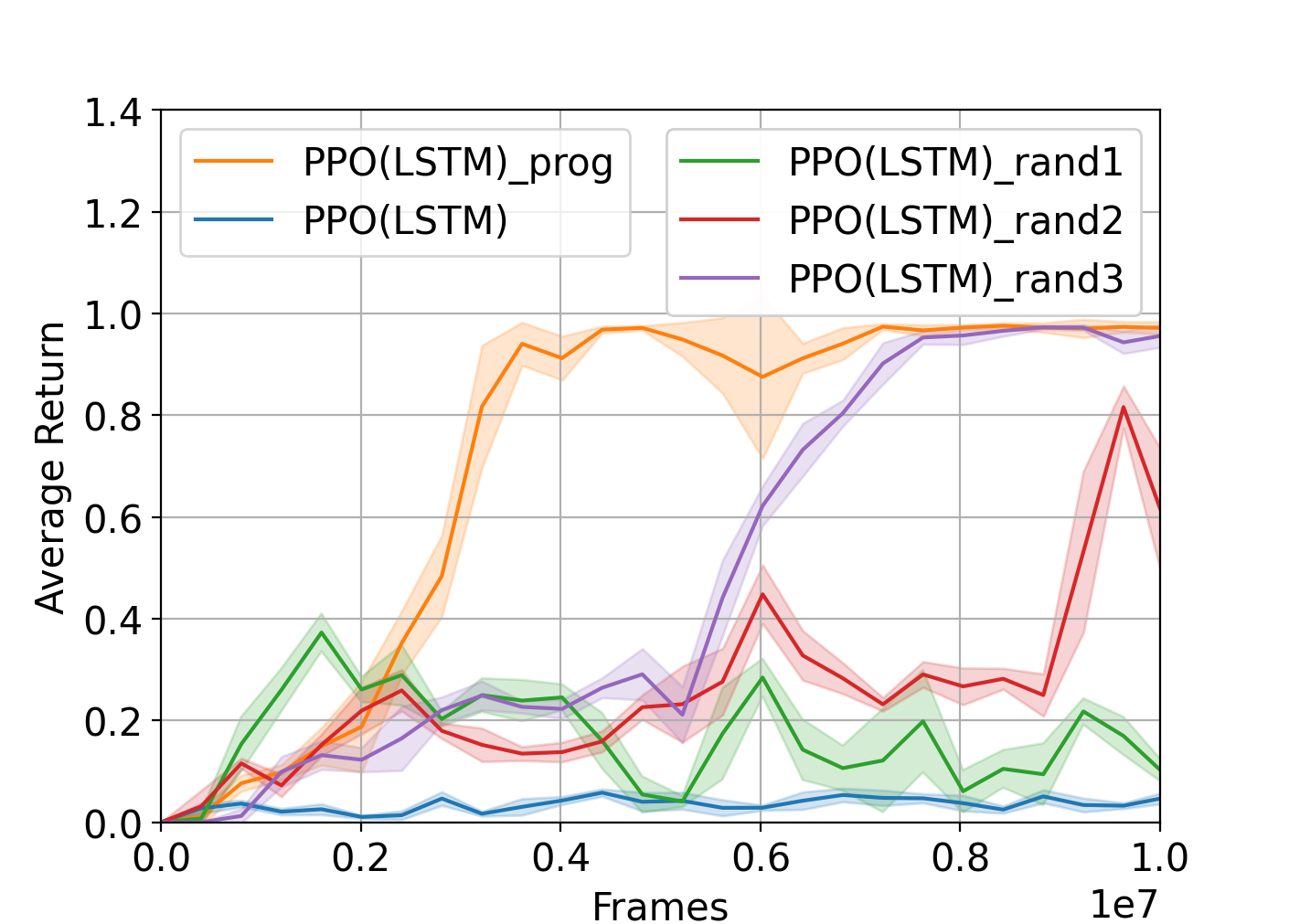}
         \caption{DoorKey-16x16}
         \label{fig6_4}
     \end{subfigure}     
     \begin{subfigure}[b]{0.32\textwidth}
         \centering
         \includegraphics[height=4.1cm, width=5.5cm]{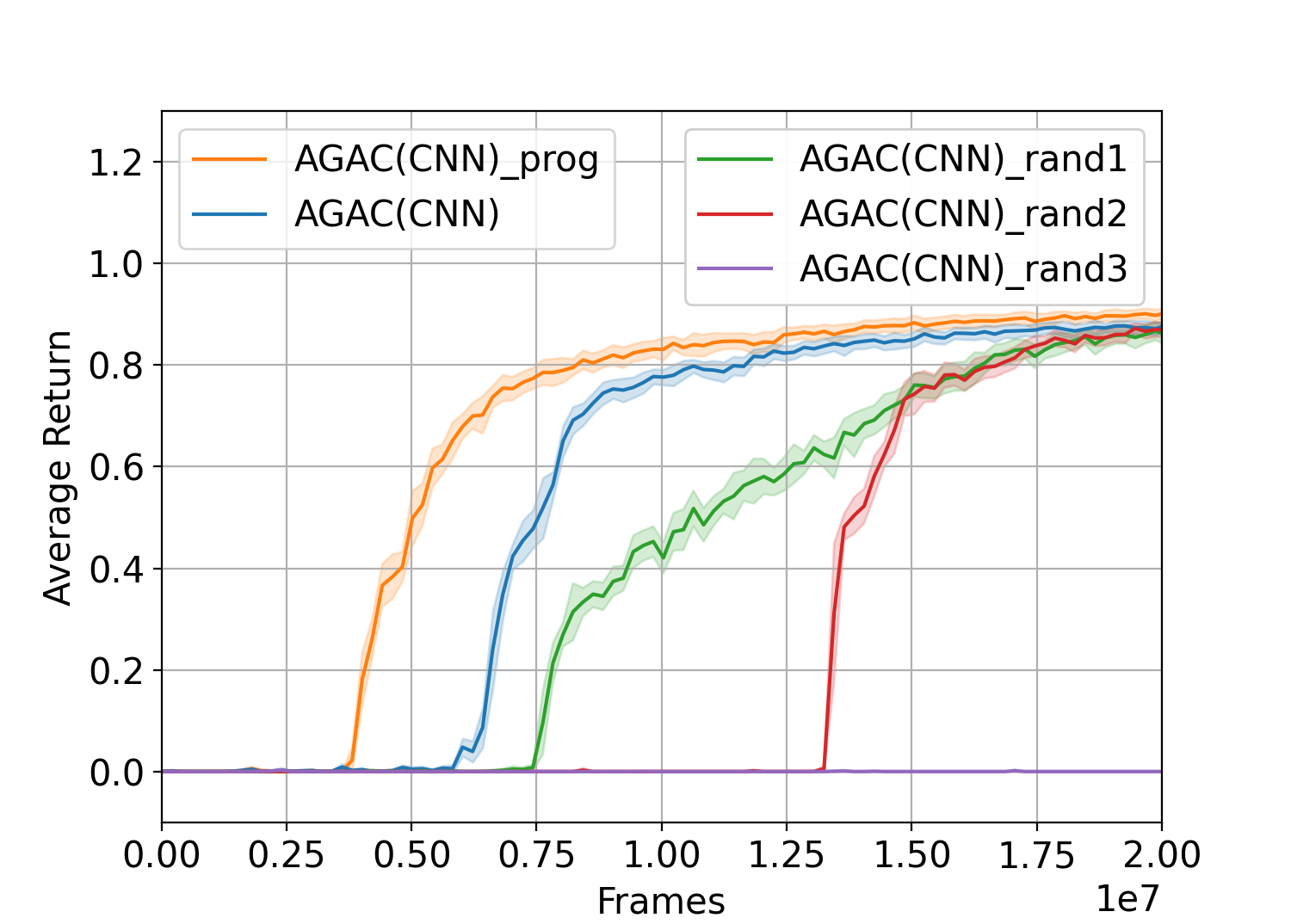}
        \caption{KeyCorridorS4R3} 
         \label{fig6_5}
     \end{subfigure}    
   \begin{subfigure}[b]{0.32\textwidth}
         \centering
         \includegraphics[height=4.1cm, width=5.5cm]{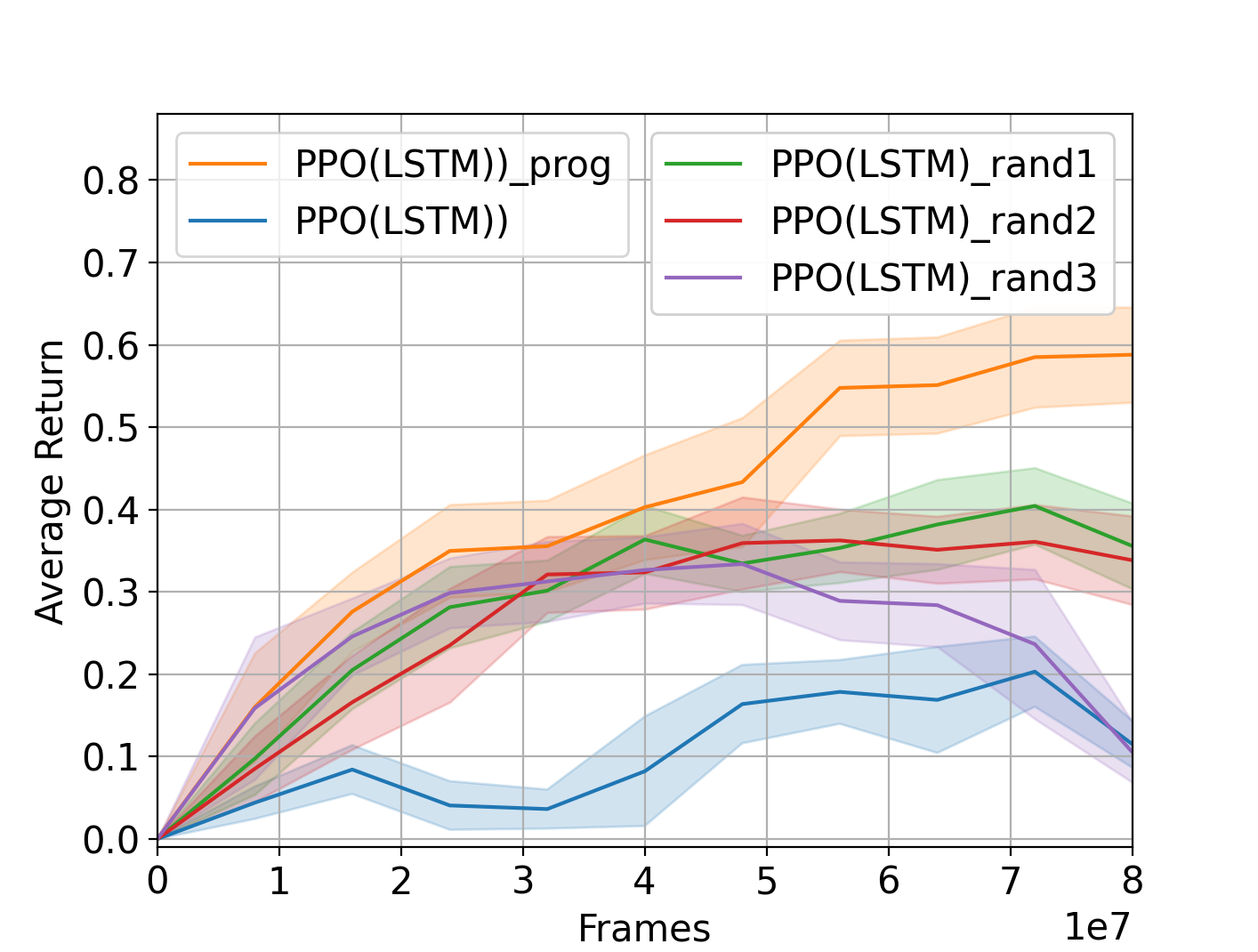}
        \caption{ObstructedMaze-Full} 
         \label{fig6_8}
     \end{subfigure}    
     \caption{In each task, given a sketch and a symbolic constraint, assign the holes with different values and use the induced programmatic reward functions to train agent policies. AGAC/PPO\_prog indicates that the hole assignments are learned via Algorithm 1; AGAC/PPO\_rand\# with an index \# indicates that the holes are randomly assigned with some values that satisfy the symbolic constraint for that task. CNN and LSTM indicate the structures of the actor-critic networks.} 
\end{figure*}

\subsection{Ablation Studies}
In addition to the questions investigated in the experimental section of the paper, we have performed ablation studies to investigate the following three questions. 
\begin{itemize}
    \item \textbf{E.} Is solving a PRD problem with random assignments to the holes sufficient to obtain an effective programmatic reward function for RL training?
    \item \textbf{F.} Can Algorithm 1 still work with weaker symbolic constraints or even without any symbolic constraint?
    \item \textbf{G.} Can Algorithm 1 work with different sketches?
\end{itemize}

For question \textbf{E}, we solve PRD problems for all three tasks by randomly generating hole assignments that satisfy the symbolic constraints. Those assignments are generated by only optimizing the supervised objective $J_c$ which is mentioned in the PRDBE section. 
\begin{itemize}
\item \textbf{DoorKey-16x16 }. In Fig.\ref{fig6_4}, we test three 3 randomly generated hole assignments for the programmatic reward function, each annotated by PPO(LSTM)\_rand\# . The trained PPO(LSTM) agents achieve higher performance with less amount of frames than the PPO(LSTM) agent that is trained with the default reward function. However, the programmatic reward function with a learned hole assignment, annotated by PPO(LSTM)\_prog, enables the agent to attain high performance with the lowest amount of frames. 
\item \textbf{KeyCorridorS4R4 }. we test 3 randomly generated hole assignments for  the programmatic reward functions, each  annotated by AGAC(CNN)\_rand\#.   As in Fig.\ref{fig6_5}, the agents trained with the programmatic reward functions with random assignments used even more frames than that trained with the default reward. In contrast, the agent trained with the programmatic reward function with a learned hole assignment achieves the best efficiency in achieving high performance. 
\item \textbf{Obstructed-Full }. In Fig.\ref{fig6_7}, we test 3 randomly assigned reward functions, each annotated by PPO(LSTM)\_rand\#. The training efficiency is competitive in the early stage of the training process. However, as the number of frames increases, the performance of the trained policies tend to decrease, which is unlike that of the programmatic reward function with learned hole assignment, as annotated by PPO(LSTM)\_prog. Note that the rewards sent to the PPO(LSTM) agents in this environment are all discounted by state-visitation count~\cite{flet-berliac2021adversarially}.
\end{itemize}

For question \textbf{F}, we conduct experiments on two of the tasks. 
\begin{itemize}
    \item \textbf{DoorKey-8x8 }. We test whether Algorithm 1 can properly train a policy even if \textbf{no symbolic constraint is provided}. That means, Algorithm 1 has to infer proper assignment for the holes solely by learning from demonstrations. As the plot annotated by Alg1(no c) w/ PPO(CNN) in Fig.\ref{fig6_3} shows, Algorithm 1 succeeded in training a PPO(CNN) policy to achieve the same level of performance with almost the same amount of frames as that PPO(CNN) policy trained with the default reward function, although both are inferior compared to running Algorithm 1 with the symbolic constraint included, this result is a strong evidence that validates Algorithm 1.  
    
    \item \textbf{KeyCorridorS3R3}. We modify the symbolic constraint by replacing the predicates concerning the relations between holes, such as the ones in Table.\ref{tab1}, with non-relational ones such as $\mathtt{?_{id}}\leq 0$ (detail will be shown in Table.\ref{tab6_1}-\ref{tab6_1_} and  explained after that). Basically, the non-relational constraint is weaker than its relational counterpart. The training result obtained with the non-relational constraint is shown by the curve annotated by Alg1(sign\_only) w/ AGAC(LSTM) in Fig.\ref{fig6_8}. By including the relational predicates, we obtain the curve annotated by Alg1 w/ AGAC(LSTM). Observe that by only adopting the non-relational predicates, the learning algorithm can still produce a policy with less frames than the one trained with the default reward, as annotated by AGAC(LSTM).  However, the training process is less stable compared with its relational counterpart.
\end{itemize}

For question \textbf{G}, beside the sketch that leads to the results in the \textbf{Experiment} section in the main text, we design two more sketches for the ObstructedMaze task. Those two sketches are annotated as prog1 and prog2 while the sketch involved in the \textbf{Experiment} section is annotated as prog. 
We run Algorithm 1 with these two sketches in the ObstructedMaze-2Dhlb environment and compare the results with the sketch annotated by prog in the Experiments section. Fig.\ref{fig6_6} shows the results of using PPO(LSTM) in line 5 of Algorithm 1 and Fig.\ref{fig6_7} shows the result of using AGAC(LSTM). We will describe the difference between these sketches in the appendix.
\section{Conclusion}
We propose a novel paradigm for using {programs to specify the reward functions in RL environments.} We have developed a framework to complete a reward program sketch by learning from expert demonstrations. We experimentally validate our approach on challenging benchmarks and by comparing with SOTA baselines. Our future work will focus on reducing human efforts in the sketch creation process.

\bibliography{main_arxiv}
\clearpage
\section{Appendix}
In this appendix, we will present the sketches and symbolic constraints used in the experiments, experimental setup, and the alternative sampling scheme mentioned in the paper.

\subsection{Sketches and Symbolic Constraints}
In this section, we provide pseudo-code of the sketches as well as details on the symbolic constraints for the three tasks considered in the Experiments section. For readability purposes, we use the notation $\mathtt{traj}$ instead of the term $x$ specified in the syntax rule \eqref{syn3_1} to refer to the input trajectory.

\noindent \textbf{DoorKey}. We show the DoorKey environments of different sizes in Fig.\ref{fig6_0_1} and \ref{fig6_0_2}. The sketch is shown in Fig.\ref{fig6_0}. Most statements in the Fig.\ref{fig6_0} are self-explanatory. We use the function $\mathtt{pred}$ to identify the token of any given state-action tuple, then use $\mathtt{match}$ to compare the token of the given state-action tuple with the tokens listed in line 4, 5, 6, 9 and 12 to determine the reward. Line 10 and 13 both scan past states in the trajectory with function $\mathtt{filter}$ and use $\mathtt{len}$ to check whether the agent has unlocked the door before. In line 7, the conditional statement specifies that the agent will be penalized with $\mathtt{?_{3}}$ for closing the door, which is a redundant behavior for the task, unless the accumulated penalty will exceed the reward $\mathtt{?_2}$ for unlocking the door. This is a heuristic that prevents an under-trained agent from being excessively penalized for its actions related to the door. Note that by allowing putting holes in the guard of the conditional we make the programmatic reward functions non-linear in the holes (instead of being linear in the holes with one-hot coefficients). This is a major difference between programmatic reward functions and the linear functions used in the generic IRL methods. 

\begin{figure}
     \centering
     \begin{subfigure}[b]{0.23\textwidth}
         \centering
        \includegraphics[height=3.8cm,width=4cm]{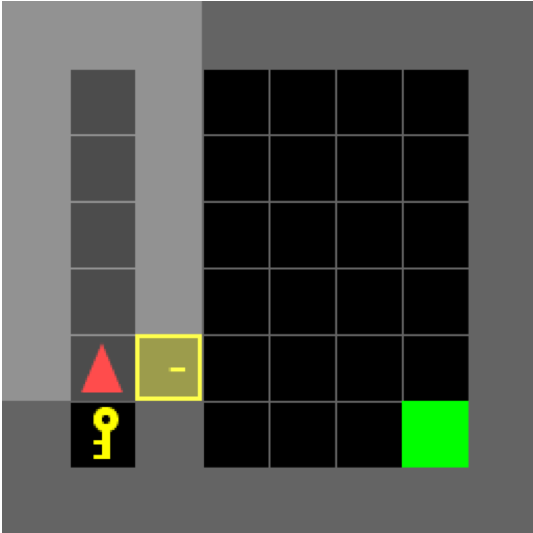}
        \caption{} 
         \label{fig6_0_1}
     \end{subfigure}
        \begin{subfigure}[b]{0.23\textwidth}
         \centering
         \includegraphics[height=3.8cm,width=4cm]{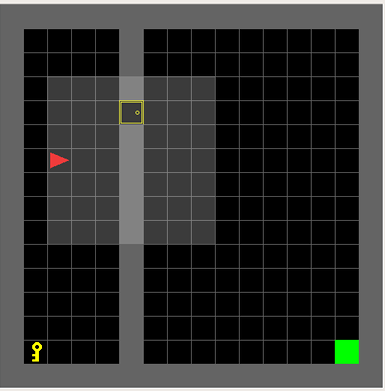}
         \caption{}
         \label{fig6_0_2}
     \end{subfigure}
     \begin{subfigure}[b]{0.44\textwidth}
     \includegraphics[height=5cm, width=10cm]{figures/fig6}
     \caption{} \label{fig6_0}
     \end{subfigure}
     \caption{(a)DoorKey-8x8; (b)DoorKey-16x16; (c)Our programmatic reward sketch for DoorKey task}
\end{figure}

\begin{table}[b]
    \centering
    \begin{tabular}{l|l}
            \textbf{Properties} & \textbf{Predicates} \\
            $[c_1]$Reward reaching the goal & $\mathtt{\bigwedge\limits^{5}_{\mathtt{id}=1}(\mathtt{?_{id}} \leq \mathtt{?_1})}$ \\
            $[c_2]$Penalize dropping unused key & $\mathtt{?_5+?_4\leq 0}$\\
            $[c_3]$Reward unlocking door & $\mathtt{\bigwedge\limits^5_{\mathtt{id}=2}({\mathtt{?_{id}}}\leq \mathtt{?_2})}$\\
            $[c_4]$Penalty for closing door & $\mathtt{\mathtt{?_3}\leq 0}$\\
            $[c_5]$Mildly penalize door toggling &$\mathtt{\mathtt{?_3}+\mathtt{?_2}\leq 0}$
    \end{tabular}
    \caption{The correspondence between properties and predicates for the DoorKey sketch in Fig.\ref{fig6_0}}
    \label{tab2}
\end{table}
  
The symbolic constraint for this task is defined as the conjunction of the predicates listed in Table.\ref{tab2}. In $c_1$ and $c_3$ we specify that the reward for unlocking the door is larger than any other behavior except for the reward for reaching the goal. In $c_2$, we specify that if the agent has not used the key to unlock the door, the total reward for picking up and dropping the key must be non-positive. In $c_4$ we specify that closing door should be penalized with non-positive reward while in $c_5$ we ensure that such penalty is limited.

\noindent \textbf{KeyCorridor}. We show the KeyCorridor environments of different sizes in Fig.\ref{fig6_1_1}-\ref{fig6_1_3}. We show the pseudo-code for this task in Fig.\ref{fig6_1}.

\begin{figure*}
     \centering
     \begin{subfigure}[b]{0.32\textwidth}
         \centering
        \includegraphics[height=4.8cm,width=5cm]{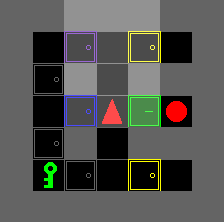}
        \caption{} 
         \label{fig6_1_1}
     \end{subfigure}
      \begin{subfigure}[b]{0.32\textwidth}
         \centering
         \includegraphics[height=4.8cm,width=5cm]{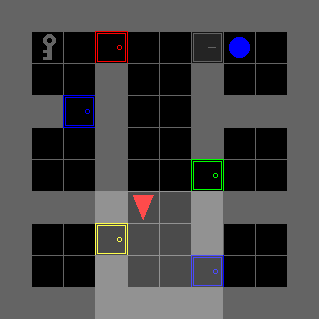}
         \caption{}
         \label{fig6_1_2}
     \end{subfigure} 
      \begin{subfigure}[b]{0.32\textwidth}
         \centering
         \includegraphics[height=4.8cm,width=5cm]{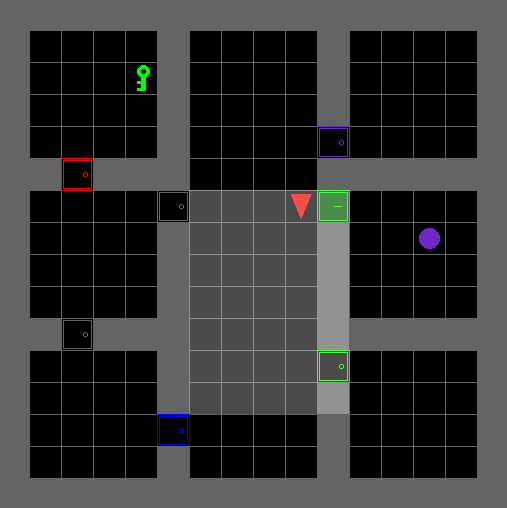}
         \caption{}
         \label{fig6_1_3}
     \end{subfigure} \hfill%
     \begin{subfigure}[b]{0.95\textwidth}
     \centering
     \includegraphics[height=9.cm, width=15cm]{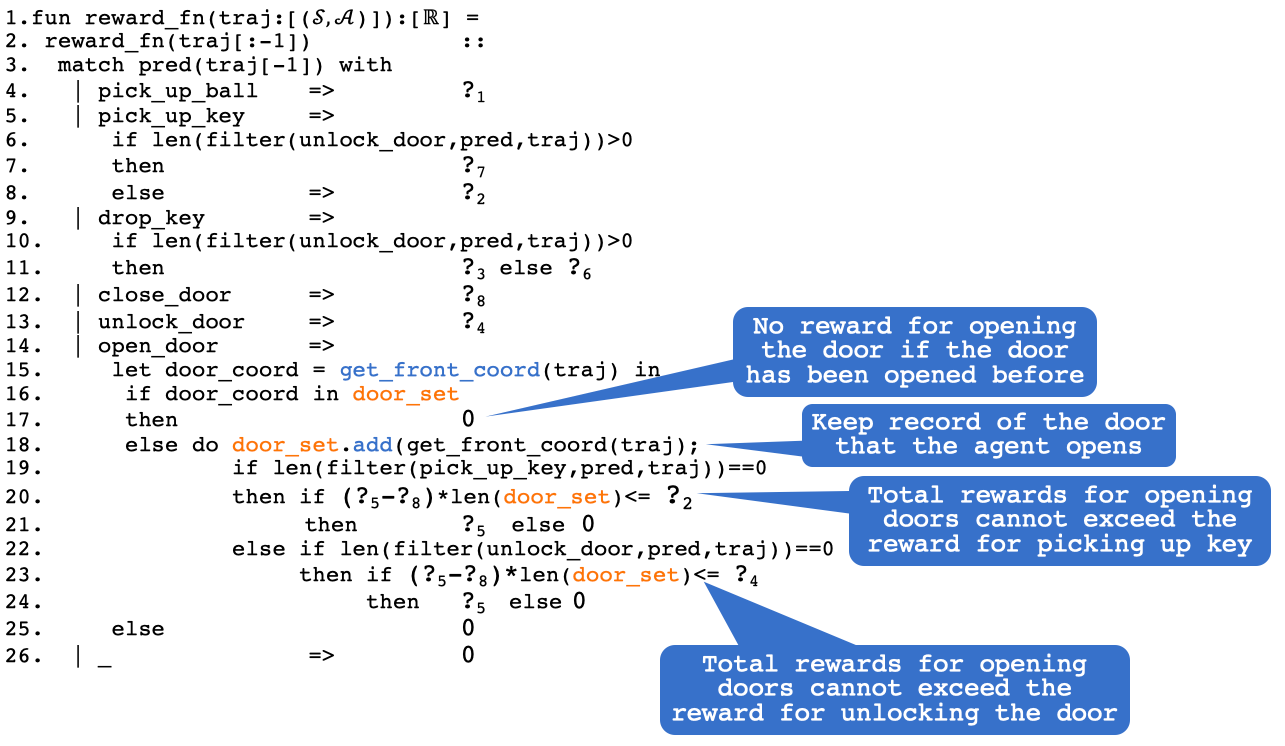}
     \caption{} \label{fig6_1}
     \end{subfigure}
     \caption{(a)KeyCorridorS3R3; (b)KeyCorridorS4R3; (c) KeyCorridorS6R3; (d) The pseudo-code of our program sketch for KeyCorridor task}
\end{figure*}

\begin{table}[t]
        \centering
        \begin{tabular}{l|l}
            \textbf{Properties} & \textbf{Predicates} \\
            $[c_1]$Reward picking up ball & $\mathtt{\bigwedge\limits^8_{id=2}(\mathtt{?_{id}}\leq \mathtt{?_1})}$\\
            $[c_2]$Reward 1st time picking up key & $\mathtt{ \mathtt{?_2}\geq 0}$\\
            $[c_3]$Reward dropping used key & $\mathtt{ \mathtt{?_3}\geq 0}$\\
            $[c_4]$Reward unlocking door & $\mathtt{ \mathtt{?_4}\geq 0}$ \\
            $[c_5]$Encourage opening door & $\mathtt{ \mathtt{?_5}\geq 0}$\\
            $[c_7]$Penalize meaningless move & $\mathtt{\mathtt{?_8}\leq0}$\\
            $[c_8]$Moderately reward opening door & $\mathtt{\mathtt{?_5}-\mathtt{?_8}\leq \mathtt{?_2}}$\\
            $[c_9]$Penalize dropping unused key &$\mathtt{\mathtt{?_2}+\mathtt{?_6}\leq 0}$\\
            $[c_{10}]$Penalize picking up used key &$\mathtt{\mathtt{?_3}+\mathtt{?_7}\leq 0}$\\
        \end{tabular}
        \caption{The correspondence between properties and predicates for the reward sketch of KeyCorridor in Fig.\ref{fig6_1}}
        \label{tab6_1}
    \end{table}

\begin{table}[t]
        \centering
        \begin{tabular}{l|l}
            \textbf{Properties} & \textbf{Predicates} \\
            $[c_1]$Reward picking up ball & $\mathtt{?_1}\geq 0$\\
            $[c_2]$Reward 1st time picking up key & $\mathtt{ \mathtt{?_2}\geq 0}$\\
            $[c_3]$Reward dropping used key & $\mathtt{ \mathtt{?_3}\geq 0}$\\
            $[c_4]$Reward unlocking door & $\mathtt{ \mathtt{?_4}\geq 0}$ \\
            $[c_5]$Encourage opening door & $\mathtt{ \mathtt{?_5}\geq 0}$\\
            $[c_7]$Penalize meaningless move & $\mathtt{\mathtt{?_8}\leq0}$\\
            $[c_8]$ Penalize dropping unused key &$\mathtt{?_6}\leq 0$\\
            $[c_{10}]$Penalize picking up used key &$\mathtt{?_7\leq 0}$\\
        \end{tabular}
        \caption{The non-relational predicates for the reward sketch of KeyCorridor in Fig.\ref{fig6_1}}
        \label{tab6_1_}
    \end{table}
 
This sketch maintains a set of door coordinates, $\mathtt{door\_set}$. When the agent opens a door, the sketch checks $\mathtt{door\_set}$ as in line 16 whether this door has been opened before. The function $\mathtt{get\_front\_coord}$ scans the past states in the trajectory and measures the relative position of the door w.r.t the agent's initial position. If the door has not been opened, the relative coordinate of the door is added to $\mathtt{door\_set}$ as in line 18. The sketch determines the reward for an opening-door behavior depending on whether the agent has found the key as in line 19. We identify this condition for the consideration that whereas the agent may have to search from door to door to find the key, it may not have to do another round of exhaustive search for the locked door afterwards, since it may have spotted the locked door before it finds the key. Especially, we implement a reward scheme in line 20 and 23 such that, if the accumulated rewards for opening doors have exceeded the rewards for finding the key or unlocking the door, which are both crucial milestones for finishing the task, the sketch outputs a reward $0$ instead of $\mathtt{?_5}$. Note that in line 20 and line 23 we subtract $\mathtt{?_8}$ to make sure that, given any trajectory, even if all the opening-door actions were replaced with some meaningless action, such as closing door, the agent can still gain even higher total reward by picking up a key or unlocking a door. This design heuristic aims at rewarding the agent to explore the environment while preventing the agent from excessive exploration. 

A symbolic constraint $c$ for the sketch is defined as the conjunction of the predicates listed in Table.\ref{tab6_1}. Note that when an assignment for the holes $\mathtt{{?}_{id=1:8}}$ is given, the maximum number of doors that the agent is encouraged to open is fixed due to the conditionals in line 19 and line 22 in Fig.\ref{fig6_1}. This feature allows the learned program to generalize to larger environments if the number of doors does not change too much. As mentioned earlier in the ablation study, we also tested a weaker symbolic constraint that only contains non-relational predicates. This symbolic constraint is shown in Table.\ref{tab6_1_}. It only specifies the signs of the holes and ignores the relational order between the holes.

\noindent\textbf{ObstructedMaze}.  We show the environments of different sizes in Fig.\ref{fig6_2_1} and \ref{fig6_2_2}. We designed three sketches for this task each implementing a different reward scheme. We show the pseudo-code for the first sketch, with which we obtain the results in the Experiments section as well as the ablation study for question \textbf{E}. We will also briefly illustrate the two other sketches. 

\begin{figure*}
     \centering
     \begin{subfigure}[b]{0.2\textwidth}
         \centering
        \includegraphics[height=3.8cm,width=1.5cm]{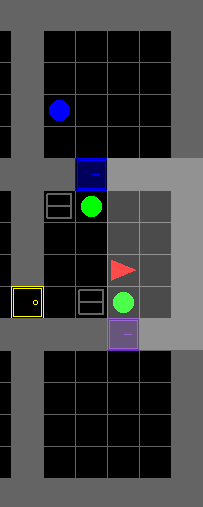}
        \caption{} 
         \label{fig6_2_1}
     \end{subfigure}
        \begin{subfigure}[b]{0.36\textwidth}
         \centering
         \includegraphics[height=3.8cm,width=4cm]{figures/fig9}
         \caption{}
         \label{fig6_2_2}
     \end{subfigure}\hfill%
     \begin{subfigure}[b]{0.95\textwidth}
     \centering
     \includegraphics[height=12cm, width=17cm]{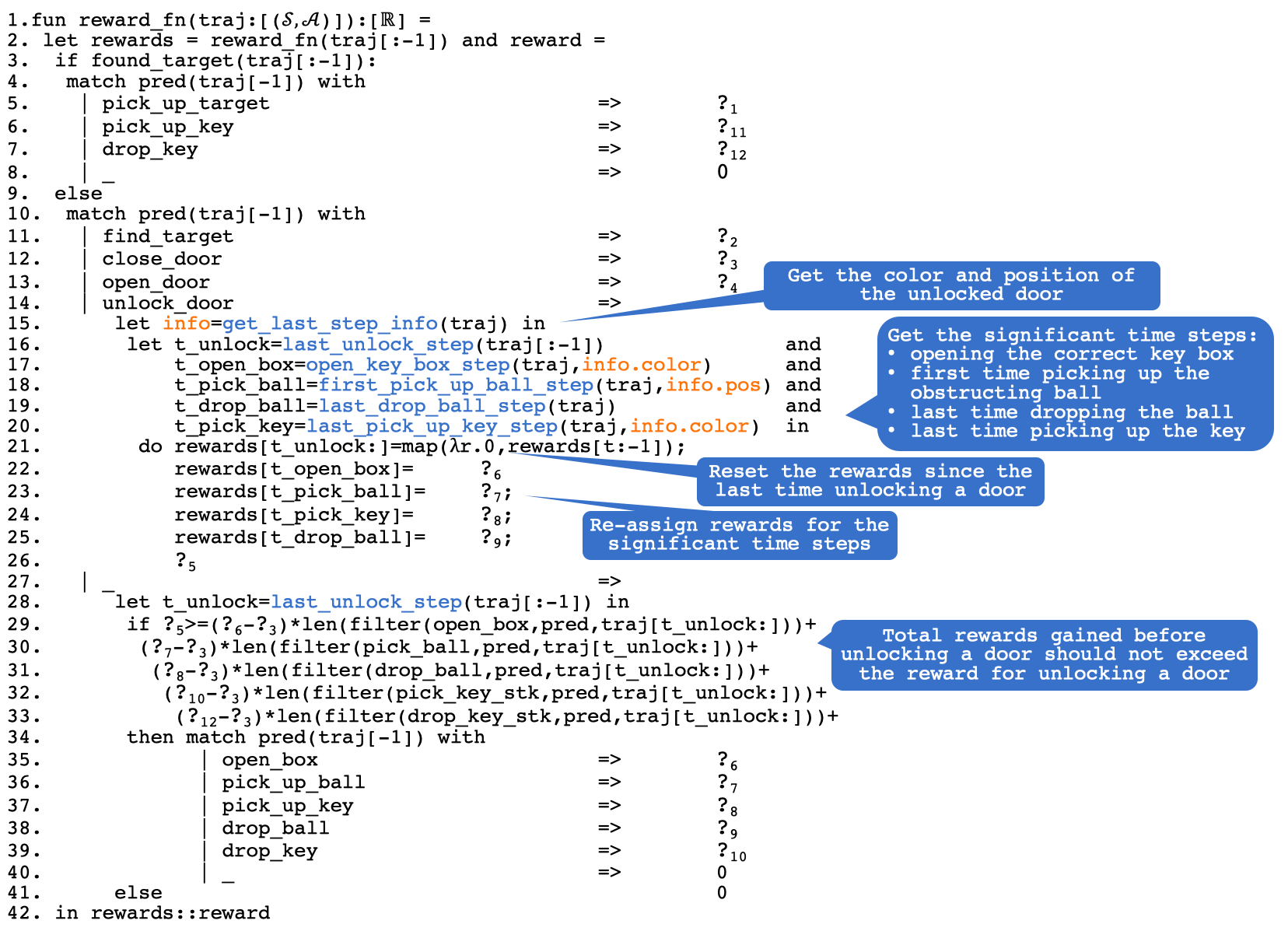}
     \caption{} \label{fig6_2}
     \end{subfigure}
     \caption{(a) ObstructedMaze-2Dhlb; (b) ObstructedMaze-Full; (c) Program sketch for ObstructedMaze task}
\end{figure*}

\begin{table}
        \centering
        \begin{tabular}{l|l}
            \textbf{Properties} & \textbf{Predicates} \\
            $[c_1]$Reward picking up target & $\mathtt{\bigwedge\limits^{12}_{id=2}(\mathtt{?_{id}}\leq \mathtt{?_1})}$\\
            $[c_2]$Reward finding target& $\mathtt{ \mathtt{?_2}\geq ?_4+?_5-2 ?_3}$\\
            $[c_3]$Reward opening door & $\mathtt{ \mathtt{?_4}\geq 0}$\\
            $[c_4]$Reward unlocking door & $\mathtt{\mathtt{?_{5}}\geq \sum\limits^{10}_{id=6}\mathtt{?_{id}}}$ \\
            $[c_5]$Penalize meaningless move & $\mathtt{\mathtt{?_3}\leq0}$\\
            $[c_6]$Penalize picking up used key &$\mathtt{?_{11}+?_{12}}\leq 0$\\
            $[c_7]$Reward opening box & $\mathtt{\mathtt{?_6}\geq0}$\\
            $[c_8]$Reward picking up ball & $\mathtt{\mathtt{?_7}\geq 0}$\\
            $[c_9]$Reward picking up key &$\mathtt{\mathtt{?_8}\geq 0}$\\
            $[c_{10}]$Reward dropping ball &$\mathtt{\mathtt{?_{8}}\geq 0}$\\
            $[c_{11}]$Reward dropping used key & $\mathtt{?_{12}}\geq 0$
        \end{tabular}
        \caption{The correspondence between properties and predicates for the reward sketch of ObstructedMaze task in Fig.\ref{fig6_2}}
        \label{tab6_2}
    \end{table}

From line 3 to 8 in Fig.\ref{fig6_2}, we specify that if the agent has spotted the targeted blue ball, the program will only respond to three behaviors of the agent: picking up the target in line 5, picking a key in line 6 and dropping the key in line 7. For all other behaviors the program will return a $0$ reward as in line 8. 
If the agent has not spotted the target, the program will bypass line 3-8 and execute line 10. We use the conditional statements from line 28-40 to implement a similar heuristic as that in line 19-24 of the sketch in Fig.\ref{fig6_1}, that is, to conditionally encourage exploration. 

From line 14-27, we implement a new heuristic for this task. If the program detects in line 16 that agent manages to unlock a door, the program checks the last time step when the agent unlocked a door as in line 16 and resets all the rewards from then till the contemporary step to $0$ as in line 21. Then from line 17-25 the program selectively re-assign 
rewards to some significant time steps. The functions called in line 15-20 all check the hindsight trajectory in the similar way as the function $\mathtt{get\_front\_coord}$ in Fig.\ref{fig6_1} does. We omit the details of implementations in those functions here. The underlying idea behind line 15-25 is that once the agent manages to unlock a door, the program is able to recognize which past behaviors directly contribute to this door unlocking outcome, e.g. by opening which box the agent found the key for this door, which green ball was obstructing this door, where the agent put the ball so that it would not longer obstruct the door. We adopt such a heuristic because it is cumbersome to judge every behavior of the agent in this task before observing any meaningful outcome. By only recognizing the milestone behaviors instead of carrying out a detailed motion planning, we make a trade-off between effectiveness and complexity in the in the programmatic reward function design. We remark that through line 21-25, the sketch modifies hindsight rewards based on the current information. Similar ideas have been proposed in \textit{hindsight experience replay}. However, the existing works define the rewards as the Euclidean distances between the agent and some targets, which do not suit our task. By conducting the programmatic reward function design procedures, we have the flexibility to adapt to different tasks. 

The symbolic constraint $c$ is defined as the conjunction of the predicates listed in Table.\ref{tab6_2}. Especially, $c_4$ together with the conditional from line 28-33 we make sure that the total reward gained from exploration is no larger than that gained from unlocking doors.

For the other two sketches, we make some modification on the sketch in Fig.\ref{fig6_2}. The first sketch, which we annotate as prog1, is different from the one in Fig.\ref{fig6_2} by removing lines 15-25. The second sketch, which we annotate as prog2, is different from the one in Fig.\ref{fig6_2} by additional treating opening door and unlocking door as exploration behaviors before finding the goal. Basically, when the agent is opening or unlocking a door as in line 13 or 14 in Fig.\ref{fig6_2}, prog2 subtracts $\mathtt{?_2}$ with the total rewards that the agent has gained from opening and unlocking doors in the trajectory.
If this subtraction ends up non-positive, prog2 no longer rewards the agent for opening or unlocking doors in the future. Furthermore, the behaviors listed from lines 35-39 in Fig.\ref{fig6_2} will no longer be rewarded either, which is equivalent to letting $\mathtt{?_5}$ equal $0$ in lines 29, thus having the conditional always points to line 41 henceforth. This modification basically makes sure that the total reward of the trajectory is upper-bounded by a value dependent on the reward $\mathtt{?_2}$.

\subsection{Training details}
\begin{itemize}
\item \textbf{Training Overhead}. We note that the sketches in Fig.\ref{fig6_0}, \ref{fig6_1} and \ref{fig6_2} all require checking hindsight experiences, or maintaining memory or other expensive procedures. However, line 7 of Algorithm 1 requires running all $K$ candidate programs on all $m$ sampled trajectories, which may incur a substantial overhead during training. Our solution is that, before sampling any program as in line 6 of Algorithm 1, we evaluate the result of $[\![e]\!](\tau_{A,i})$ for all the $m$ trajectories. As mentioned earlier, each $[\![e]\!](\tau_{A,i})$ is a partial program with the holes $\mathtt{\textbf{?}_e}$ being the free variable. By doing this, we only need to execute once for certain expensive procedures that do not involve the holes, such as running the function $len(filter())$ in all sketches, the function $\mathtt{get\_front\_coord}$ in Fig.\ref{fig6_1} and all the functions called in line 15-20 of Fig.\ref{fig6_2}. Then we use $q_\varphi$ to sample $K$ hole assignments $\mathtt{\{\textbf{h}_{k}\}}^K_{k=1}$ from $\mathcal{H}$ and feed them to $\{[\![e]\!](\tau_{A,i})\}^m_{i=1}$ to obtain $\{\{[\![l_k]\!](\tau_{A,i}\}^m_{i=1}\}^K_{k=1}$. By replacing line 6 and line 7 with those two steps in Algorithm 1, we significantly reduce the overhead. 
\item \textbf{Supervised Learning Loss}. In Algorithm 1, a supervised learning objective $J_c$ is used to penalize any sampled hole assignment for not satisfying the symbolic constraint. In practice, since our sampler $q_\varphi$ directly outputs the mean and log-variance of a multivariate Gaussian distribution for the candidate hole assignments, we directly evaluate the satisfaction of the mean. Besides, as mentioned earlier, in our experiments we only consider symbolic constraint as a conjunction of atomic predicates, e.g. $c=\wedge^n_{i=1}\mu_i$ and each $\mu_i$ only concerns linear relation between the holes, we reformulated each $\mu_i$ into a form $\lambda \textbf{h}.u_i(\textbf{h})\leq 0$ where $u_i$ is some linear function of the holes. We make sure that $(u_i(\textbf{h})\leq 0) \leftrightarrow (\mu_i(\textbf{h})==\top)$. Given a hole assignment $\textbf{h}$ output by a $q_\varphi$, we first calculate each $u_i(\textbf{h})$, which is now a real number, then we let  $J_c(q_\varphi)$ be a negative binary cross-entropy loss for $Sigmoid(ReLU([u_i(\textbf{h}), \ldots, u_n(\textbf{h})]^T)))$ with $0$ being the ground truth. This loss penalizes any $\textbf{h}$ that makes $u_i(\textbf{h})>0$. In this way $J_c(q_\varphi)$ is differentiable w.r.t $\varphi$. Thus, we do not implement the logarithmic trick when optimizing $J_c$. 

\noindent\textbf{Network Architectures}. Algorithm 1 involves an agent policy $\pi_\varphi$, a neural reward function $f_\theta$ and a sampler $q_\phi$. Each of the three is composed of one or more neural networks.
\begin{itemize}
    \item \textbf{Agent policy $\pi_\varphi$}. Depending on the tasks, we prepare two versions of actor-critic networks, a CNN version and an LSTM version. For the CNN version, we directly adopt the actor-critic network from the off-the-shelf implementation of AGAC~\cite{flet-berliac2021adversarially}. The CNN version has 3 convolutional layers each with 32 filters, 3$\times$3 kernel size, and a stride of 2. A diagram of the CNN layers can be found in~\cite{flet-berliac2021adversarially}. For the LSTM version, we simply concatenate 3 convolutional layers, which are the same as those in the CNN version, with a LSTM cell of which the state vector has a size of 32. The LSTM cell is then followed by multiple fully connected layers each to simulate the policy, value and advantage functions. The AGAC and PPO policies always share the identically structured actor-critic networks, in both the CNN and LSTM versions. While AGAC contains other components~\cite{flet-berliac2021adversarially}, the PPO agent solely consists of the actor-critic networks. 
    \item \textbf{Neural reward function $f_\theta$}. For all the tasks, we use identical networks. Each network has 3 convolutional layers each with 16, 32 and 64 filters, 2$\times$2 kernel size and a stride of 1. The last convolutional layer is concatenated with an LSTM cell of which the state vector has a size of 128. The LSTM cell is then followed by a 3-layer fully connected network where each hidden layer is of size 64. Between each hidden layer we use two $tanh$ functions and one Sigmoid function as the activation functions. The output of the Sigmoid function is the logit for each action in the action space $\mathcal{A}$. Finally, given an action in a state, we use softmax and a Categorical distribution to output the log-likelihood for the given action as the reward.
    \item \textbf{Sampler $q_\phi$}. Since the holes in our sketches all take numerical values. We implement for each sketch a sampler that outputs the mean and log-variance of a multivariate Gaussian distribution of which the dimension is determined by the number of holes in the sketch. The network structures, on the other hand, are identical across all tasks. The input to each sampler is a constant $[1, \ldots, 1]^T$ of size 20. Each sampler is a fully-connected network with 2 hidden layers of size 64. The activation functions are both $tanh$.  Suppose that there are $|\textbf{?}_e|$ holes in the sketch. Then the output of the sampler $q_\varphi$ is a vector of size no less than $2|\textbf{?}_e|$.  The $|\textbf{?}_e|$ most significant elements in the output vector will be used as the mean of the Gaussian, and the next $|\textbf{?}_e|$ most significant elements constitute a diagonal log-variance matrix.
    \end{itemize}
    \item{\textbf{Normalization}}. Besides outputting the mean and log-variance for the hole assignment, the sampler $q_\varphi$ additionally outputs a value $\log \hat{z}_l$ to normalize $\exp(l(\tau))$ in $J_{gen}$. Specifically, we introduce such normalization term because our formulated learning objective aims at having $\hat{p}(\tau|l)=p(\tau)\exp(l(\tau))$ match the probabilities $p_E(\tau)\approx p(\tau)\pi_E(\tau)$, which implies that $l(\tau)$ has to be negative such that $exp(l(\tau))\equiv \pi_E(\tau)$. However, negative $l(\tau)$ in our sketch design indicates penalization. During implementation, we replace every $[\![l]\!](\tau)[t]$ with $[\![l]\!](\tau)[t] - \log \hat{z}_l$ when calculating $J_{gen}$ and let $l(\tau):=\sum_t([\![l]\!](\tau)[t] - \log \hat{z}_l)$ in $\hat{p}(\tau|l)=p(\tau)\exp(l(\tau))$. Then by maximizing the ELBO in \eqref{eq4_11}, we on one hand search for the proper hole assignment for the sketch, and on the other hand search for a $\hat{z}_l$ such that $\hat{p}(\tau|l)\equiv p(\tau|l)\equiv p_E(\tau)$ can be possibly realized. Since $\hat{z}_l$ is constant, we still use $[\![l^*]\!](\tau)$ for the policy training in line 5 of Algorithm 1. Note that given such $\hat{z}_l$, the normalization term $Z_l$ in \eqref{eq4_9} still has to be introduced in case that the intermediately learned $\hat{z}_l$ does not accurately normalize $\hat{p}(\tau|l)$.  
    \item{\textbf{Hyperparameters}}. 
Most of the hyperparameters that appear in Algorithm 1 are summarized as in Table.\ref{tab0}. All hyperparameters relevant to AGAC are identical as those in~\cite{flet-berliac2021adversarially} although we do not present all of them in Table.\ref{tab0} in order to avoid confusion. The hyperparameter $\eta$ is made large to heavily penalize $q_\varphi$ when its output violates the symbolic constraint $c$.
\end{itemize}

\begin{table}
\centering
\begin{tabular}{l|l}
    \textbf{Parameter} & \textbf{Value} \\
    \# Epochs & 4 \\
    \# minibatches ($\pi_\varphi$) & 8\\
    \# batch size ($f_\theta$) & 32\\
    \# frames stacked (CNN $\pi_\varphi$) & 4 \\
    \# reccurence (LSTM $\pi_\varphi$) & 1\\
    \# recurrence ($f_\theta$) & 8\\
    Discount factor $\gamma$ & 0.99\\
    GAE parameter $\lambda$ & 0.95\\
    PPO clipping parameter $\epsilon$ & 0.2\\
    $K$ & 16\\
    $\alpha$ & 0.001\\
    $\beta$ & 0.0003\\
    $\eta$ & 1.e8\\
\end{tabular}
\caption{Hyperparameters used in the training processes}
\label{tab0}
\end{table}

\subsection{Alternative Sampling Scheme}
For any \zwcchange{entity}{term}\li{entity is a very general term} in the form of $J_{v_l}=\mathbb{E}_{\tau\sim p(\tau|l)}[v_l(\tau)]$ \li{as that?}\zwcchange{as that in \eqref{eq4_9} in the paper}{such as the $\mathbb{E}_{\tau_A\sim p(\tau_A|l)}[\cdot]$ part in \eqref{eq4_7},}  we can estimate it with $\hat{J}_{v_l}$ as in\li{using?} \eqref{eq5_3} with two batches of i.i.d trajectories $\{\tau_i\}^{m}_{i=1}$ and $\{\tau_j\}^{m}_{j=1}$ of $\pi_A$.  This scheme is equivalent to independently estimating $Z_l$ that appears in \eqref{eq4_9}. Assuming that $\overline{v}_l\in \underset{\tau:p(\tau)>0}{\max}\ |v_l(\tau)|$ is an upper-bound of $v_l$, we show in Theorem \ref{th5_0} that the chance of $\hat{J}_{v_l}$ falling in a bounded neighborhood of $J_{v_l}$ increases with $m$. For the theorem to hold, we require that ${{\pi}_A(\tau)}$ is positively lower-bounded if $p(\tau)>0$, which in practice can be realized by \zwcadd{assuming that $\mathcal{A}$ is bounded and letting $\pi_A(a|s)>\epsilon$ for any $s, a$ with some $\epsilon >0$.}\zwccross{letting $\pi_A$ always reserve certain constant probability of uniformly selecting actions in $\mathcal{A}$} \li{can we be a bit more precise than certain?}. 
\begin{eqnarray}
\quad\mathclap{\hat{J}_{v_l}}\ &:=&\frac{\sum\limits^m_{i=1}\frac{\exp(l(\tau_i))}{\pi_A(\tau_i)}v_l(\tau_i)}{\sum\limits^m_{j=1}\frac{\exp(l(\tau_j))}{\pi_A(\tau_j)}}\label{eq5_3}
\end{eqnarray}

\begin{theorem}\label{th5_0}
Given a program $l$, a bounded function $v_l(\tau)\in[-\overline{v}_l, \overline{v}_l]$ and a lower-bounded agent policy $\pi_A$, i.e. $\forall \tau.p(\tau)>0 \Rightarrow \pi_A(\tau)\geq \underline{\pi_A}$, for any $\gamma>0$, the probability of $\hat{J}_{v_l}-J_{vl}\in\big[\frac{\hat{Z}_l J_{v_l}-\gamma  }{\hat{Z}_l+\gamma /\overline{v}_l}, \frac{\hat{Z}_l J_{v_l}+\gamma  }{\hat{Z}_l-\gamma /\overline{v}_l}\big]$ is no less than $\Big(1 - \exp(\frac{-2m\gamma^2\underline{\pi_A}^2/\overline{v}_l^2}{ \underset{\tau:p(\tau)>0}{\max}\exp(2l(\tau))})\Big)^4$.
\end{theorem}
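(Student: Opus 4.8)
The plan is to rewrite $\hat{J}_{v_l}$ as a ratio of two sample means of bounded, independent random variables and then control each mean with Hoeffding's inequality. First I would make the importance weights explicit by setting $w(\tau):=\exp(l(\tau))/\pi_A(\tau)$. Sampling $\tau$ by executing $\pi_A$ in the environment draws from $p(\tau)\pi_A(\tau)$, so the passive-dynamics factor $p(\tau)$ cancels inside the weight and $\mathbb{E}[w(\tau)v_l(\tau)]=\sum_\tau p(\tau)\exp(l(\tau))v_l(\tau)=Z_l J_{v_l}$, while $\mathbb{E}[w(\tau)]=\sum_\tau p(\tau)\exp(l(\tau))=Z_l$. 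Hence the numerator $\hat{N}:=\frac{1}{m}\sum_{i=1}^m w(\tau_i)v_l(\tau_i)$ is unbiased for $Z_l J_{v_l}$ and the denominator $\hat{Z}_l:=\frac{1}{m}\sum_{j=1}^m w(\tau_j)$ is unbiased for $Z_l$, each computed on its own independent batch, with $\hat{J}_{v_l}=\hat{N}/\hat{Z}_l$.

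Next I would establish the boundedness Hoeffding requires. The hypothesis $\pi_A(\tau)\geq\underline{\pi_A}$ on $\{p(\tau)>0\}$ gives $0\leq w(\tau)\leq \overline{w}:=\max_{\tau:p(\tau)>0}\exp(l(\tau))/\underline{\pi_A}$, so each denominator summand lies in $[0,\overline{w}]$, and together with $v_l(\tau)\in[-\overline{v}_l,\overline{v}_l]$ each numerator summand lies in a bounded interval of width proportional to $\overline{w}\,\overline{v}_l$. Applying the one-sided Hoeffding bound on each batch then controls the four tails $\hat{N}-Z_l J_{v_l}\geq\gamma$, $\hat{N}-Z_l J_{v_l}\leq-\gamma$, $\hat{Z}_l-Z_l\geq\gamma/\overline{v}_l$, and $\hat{Z}_l-Z_l\leq-\gamma/\overline{v}_l$, each by a factor of the form $\exp\big(-2m\gamma^2\underline{\pi_A}^2/(\overline{v}_l^2\max_{\tau:p(\tau)>0}\exp(2l(\tau)))\big)$ after substituting $\overline{w}^2=\max_\tau\exp(2l(\tau))/\underline{\pi_A}^2$. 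The deliberate choice of deviation $\gamma/\overline{v}_l$ for the denominator is what makes the four exponents coincide and what makes the subsequent ratio algebra close cleanly.

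The third step is the deterministic algebra on the quotient. On the event that all four deviations hold I have $Z_l J_{v_l}-\gamma\leq\hat{N}\leq Z_l J_{v_l}+\gamma$ and $Z_l-\gamma/\overline{v}_l\leq\hat{Z}_l\leq Z_l+\gamma/\overline{v}_l$; since $\hat{Z}_l>0$, dividing and using monotonicity of $x\mapsto a/x$ sandwiches $\hat{J}_{v_l}=\hat{N}/\hat{Z}_l$ between $\frac{Z_l J_{v_l}-\gamma}{Z_l+\gamma/\overline{v}_l}$ and $\frac{Z_l J_{v_l}+\gamma}{Z_l-\gamma/\overline{v}_l}$, which is the claimed interval. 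Finally, because the numerator and denominator are evaluated on independent batches, I would write each two-sided good event as the intersection of its two one-sided events, bound each one-sided tail by the common $\exp(\cdots)$ factor, and let the four survival factors multiply across the independent batches to obtain the stated $\big(1-\exp(\cdots)\big)^4$ lower bound.

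I expect the ratio algebra to be the main obstacle, for two reasons. First, converting additive deviations of $\hat{N}$ and $\hat{Z}_l$ into a two-sided bound on their quotient is sign-sensitive: the direction of each inequality depends on the sign of $\hat{N}$ (equivalently of $v_l$ and $J_{v_l}$, which here are log-probabilities and hence nonpositive), and the lower endpoint is meaningful only while $Z_l-\gamma/\overline{v}_l>0$, i.e. once $m$ is large enough for the given $\gamma$. Second, I would carefully reconcile the constants: the numerator summand has range proportional to $\overline{w}\,\overline{v}_l$, so I must verify its effective range against $\overline{v}_l$ so that its Hoeffding exponent matches the denominator's rather than being a constant factor weaker, and I would check whether the interval endpoints are most naturally expressed with the true normalizer $Z_l$ or the empirical $\hat{Z}_l$, since the sandwiching above produces the former.
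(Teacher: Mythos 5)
Your proposal follows essentially the same route as the paper's proof: establish that the numerator and denominator are unbiased importance-sampling estimates of $Z_l J_{v_l}$ and $Z_l$, apply one-sided Hoeffding bounds to the four tails with deviations $\gamma$ and $\gamma/\overline{v}_l$ chosen so the exponents coincide, perform the ratio algebra to sandwich $\hat{J}_{v_l}$, and multiply the four survival factors to obtain the fourth power. The concerns you flag at the end (sign-sensitivity of the quotient bounds, the requirement $Z_l-\gamma/\overline{v}_l>0$, and whether the endpoints should carry $Z_l$ or $\hat{Z}_l$) are exactly the loose ends present in the paper's own derivation, so your plan matches it in both substance and caveats.
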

\begin{proof}
We first show in \eqref{eq5_16} and \eqref{eq5_15} that the numerator and denominator of  $\hat{J}_{c, l}$ are respectively unbiased estimates of $Z_l J_{vl}$ and $Z_l$. 
\begin{eqnarray}
&&\mathclap{\underset{\tau_j\sim \pi_A}{\mathbb{E}}}\quad \Big[\frac{\exp(l(\tau_j)) }{\pi_A(\tau_j)}\Big]=\quad\mathclap{\sum\limits_{\tau_j:\pi_A(\tau_i)>0}}\quad \frac{\pi_A(\tau_j)p(\tau_j)\exp(l(\tau_j))}{\pi_A(\tau_j)}\nonumber\\
&=&\quad\mathclap{\sum\limits_{\tau_j:\pi_A(\tau_i)>0}}\quad p(\tau_j)\exp(l(\tau_j)) =Z_l\label{eq5_16}\\
&&\mathclap{\underset{\tau_i\sim \pi_A}{\mathbb{E}}}\quad \Big[\frac{\exp(l({\tau_i}))v_l({\tau_i})}{\pi_A({\tau_i})}\Big]
=Z_l J_{v_l}\label{eq5_15}
\end{eqnarray}
By Hoeffding's inequality, for arbitrary $\gamma>0$ we have the confidence \eqref{eq5_2} and \eqref{eq5_2_1} respectively on the two batches of $m$ i.i.d sampled trajectories. The term $\underset{\tau:p(\tau)>0}{\max}{\frac{v_l{\ \ \ \mathclap{\exp}\ \ \ }(l)}{{\pi}_A}}(\tau)^2$ is an abbreviation of $\underset{\tau:p(\tau)>0}{\max}\big({\frac{{v_l(\tau)}{\ \ \ \mathclap{\exp}\ \ \ }(l(\tau))}{{\pi}_A(\tau)}}\big)^2$.
\begin{eqnarray}
&&\mathclap{\mathcal{P}}\ \Big(\frac{1}{m}\sum\limits^m_{i=1}\frac{\exp(l(\tau_i))v_l(\tau_i)}{\pi_A(\tau_i)}- Z_l J_{v_l}\geq -\gamma  \Big)\nonumber \\
&\geq& 1-  \ \ \ \mathclap{\exp}\ \ \ \Big(\frac{-2m\gamma^2  }{\underset{{\tau}}{\max}{\frac{v_l{\ \ \ \mathclap{\exp}\ \ \ }(l)}{{\pi}_A}}({\tau})^2}\Big)\nonumber\\
&\geq& 1- \ \ \ \mathclap{\exp}\ \ \ \Big(\frac{-2m\gamma^2/\overline{v}_l^2}{  \underset{\tau:p(\tau)>0}{\max}\frac{\exp(2l(\tau))}{{\pi}_A(\tau)^2}}\Big)\label{eq5_2}\\
&&\mathclap{\mathcal{P}}\ \Big(\frac{1}{m}\sum\limits^m_{j=1}\frac{\exp(l(\tau_j))}{\pi_A(\tau_j)}-Z_l\leq \frac{\gamma  }{\overline{v}_l}\Big) \nonumber\\
&\geq& 1 -  \ \ \ \mathclap{\exp}\ \ \ \Big(\frac{-2m\gamma^2/\overline{v}_l^2}{  \underset{\tau:p(\tau)>0}{\max}\frac{\exp(2l(\tau))}{{\pi}_A(\tau)^2}}\Big)\label{eq5_2_1}
\end{eqnarray} 
By conjoining \eqref{eq5_2} and \eqref{eq5_2_1}, the confidence on the lower-bound of $\hat{J}_{v_l}$ is lower-bounded by \eqref{eq5_4_1} for any $\gamma>0$. 
\begin{eqnarray}
&\mathclap{\mathcal{P}}\ \Big(\hat{J}_{v_l}\geq \frac{Z_l J_{v_l}-\gamma  }{Z_l+\gamma /\overline{v}_l}\Big)=  \ \ \mathclap{\mathcal{P}}\ \Big(\hat{J}_{v_l}\geq J_{v_l} - \frac{\gamma \overline{v}_l+\gamma J_{v_l}}{Z_l\overline{v}_l  + \gamma}\Big)&\nonumber\\
&\geq {\Big(}1 - \exp(\frac{-2m\gamma^2/\overline{v}_l^2}{ \underset{\tau:p(\tau)>0}{\max}\frac{\exp(2l(\tau))}{{\pi}_A(\tau)^2}})\Big)^2&\label{eq5_4_1} 
\end{eqnarray}
Again, by Hoeffding's inequality, for arbitrary $\gamma>0$ we have the confidence \eqref{eq5_5} and \eqref{eq5_5_1} on those two batches of $m$ i.i.d sampled trajectories.
\begin{eqnarray}
&&\mathclap{\mathcal{P}}\ \Big(\frac{1}{m}\sum\limits^m_{j=1}\frac{\exp(l(\tau_j))v_l(\tau_j)}{\pi_A(\tau_j)}- Z_l J_{v_l}\leq \gamma  \Big)\nonumber \\
&\geq& 1- \ \ \ \mathclap{\exp}\ \ \ \Big(\frac{-2m\gamma^2/\overline{v}_l^2}{  \underset{\tau:p(\tau)>0}{\max}\frac{\exp(2l(\tau))}{{\pi}_A(\tau)^2}}\Big)\label{eq5_5}\\
&&\mathclap{\mathcal{P}}\ \Big(\frac{1}{m}\sum\limits^m_{i=1}\frac{\exp(l(\tau_i))}{\pi_A(\tau_i)}-Z_l\geq -\frac{\gamma  }{\overline{v}_l}\Big) \nonumber\\
&\geq& 1 -  \ \ \ \mathclap{\exp}\ \ \ \Big(\frac{-2m\gamma^2/\overline{v}_l^2}{  \underset{\tau:p(\tau)>0}{\max}\frac{\exp(2l(\tau))}{{\pi}_A(\tau)^2}}\Big)\label{eq5_5_1}
\end{eqnarray} 
By conjoining \eqref{eq5_5} and \eqref{eq5_5_1}, the confidence on the upper-bound of $\hat{J}_{v_l}$ is upper-bounded by \eqref{eq5_6_1} for any $\gamma>0$. 
\begin{eqnarray}
&\mathclap{\mathcal{P}}\ \Big(\hat{J}_{v_l}\leq \frac{Z_l J_{v_l}+\gamma  }{Z_l-\gamma /\overline{v}_l}\Big)=  \ \ \mathclap{\mathcal{P}}\ \Big(\hat{J}_{v_l}\leq J_{v_l} + \frac{\gamma \overline{v}_l+\gamma J_{v_l}}{Z_l\overline{v}_l - \gamma}\Big)&\nonumber\\
&\geq {\Big(}1 - \exp(\frac{-2m\gamma^2/\overline{v}_l^2}{ \underset{\tau:p(\tau)>0}{\max}\frac{\exp(2l(\tau))}{{\pi}_A(\tau)^2}})\Big)^2&\label{eq5_6_1} 
\end{eqnarray}
Then by conjoining \eqref{eq5_4_1} and \eqref{eq5_6_1}, we further obtain a confidence on $\hat{J}_{v_l}$ being in the interval as shown in \eqref{eq5_7}.
\begin{eqnarray}
&&\mathclap{\mathcal{P}}\ \Big(\hat{J}_{v_l} - J_{v_l}\in\big[\frac{Z_l J_{v_l}-\gamma  }{Z_l+\gamma /\overline{v}_l}, \frac{Z_l J_{v_l}+\gamma  }{Z_l-\gamma /\overline{v}_l}\big]\Big)\nonumber\\
&\geq& {\Big(}1 - \exp(\frac{-2m\gamma^2/\overline{v}_l^2}{ \underset{\tau:p(\tau)>0}{\max}\frac{\exp(2l(\tau))}{{\pi}_A(\tau)^2}})\Big)^4\nonumber\\
&\geq&\Big(1 - \exp(\frac{-2m\gamma^2\underline{\pi_A}^2/\overline{v}_l^2}{ \underset{\tau:p(\tau)>0}{\max}\exp(2l(\tau))})\Big)^4\label{eq5_7}
\end{eqnarray}
\end{proof}

\end{document}